\titleformat{\part}[display]
{\normalfont\LARGE\centering}{}{2pt}{}
\theoremstyle{plain}
\newtheorem{theorem}{Theorem}[section]
\newtheorem{lemma}[theorem]{Lemma}
\theoremstyle{definition}
\theoremstyle{remark}
\newtheorem{remark}[theorem]{Remark}
\DeclareMathOperator*{\argmax}{arg\,max}
\def\bs{\boldsymbol}
\definecolor{red}{rgb}{1,0,0}
\definecolor{blue}{rgb}{0,0,1}
\theoremstyle{plain}
\theoremstyle{definition}
\theoremstyle{remark}
\def\bs{\boldsymbol}
\definecolor{red}{rgb}{1,0,0}
\definecolor{blue}{rgb}{0,0,1}
\newcommand*{\addFileDependency}[1]{% argument=file name and extension
  \typeout{(#1)}
  \@addtofilelist{#1}
  \IfFileExists{#1}{}{\typeout{No file #1.}}
}
\def\capmystringaux#1#2\relax{\uppercase{#1}\lowercase{#2}}
\begin{document}
%% Title
\title{Nonparametric Automatic Differentiation Variational Inference with Spline Approximation
%%%% Cite as
%%%% Update your official citation here when published 
}

\author{
  Yuda Shao \\
  Department of Statistics  \\
  University of Virginia \\
  \texttt{ys7zg@virginia.edu} \\
   \And
  Shan Yu\textsuperscript{\dag} \\
  Department of Statistics\\
  University of Virginia \\
  \texttt{sy5jx@virginia.edu} \\
  %% examples of more authors
   \And
  Tianshu Feng\textsuperscript{\dag} \\
  Department of Systems Engineering and Operations Research \\
  George Mason University \\
  \texttt{tfeng@gmu.edu} \\
}
\footnotetext{\textsuperscript{\dag}Corresponding authors.}
\setcounter{footnote}{0}
\renewcommand{\thefootnote}{\arabic{footnote}}
\maketitle
\begin{abstract}
  Automatic Differentiation Variational Inference (ADVI) is efficient in learning probabilistic models. Classic ADVI relies on the parametric approach to approximate the posterior. In this paper, we develop a spline-based nonparametric approximation approach that enables flexible posterior approximation for distributions with complicated structures, such as skewness, multimodality, and bounded support. Compared with widely-used nonparametric variational inference methods, the proposed method is easy to implement and adaptive to various data structures. By adopting the spline approximation, we derive a lower bound of the importance weighted autoencoder and establish the asymptotic consistency. Experiments demonstrate the efficiency of the proposed method in approximating complex posterior distributions and improving the performance of generative models with incomplete data. 
\end{abstract}
\section{\uppercase{Introduction}}
\label{SEC:intro}

Variational Inference (VI) is widely used in data representation \citep{kingma_auto-encoding_2014, zhang2018advances}, graphical models \citep{wainwright2008graphical}, among others. VI approximates intractable distributions by minimizing the divergence between the true posterior and a chosen distribution family, aiming to identify an optimal distribution within this family. Unlike methods like Markov chain Monte Carlo (MCMC) sampling, VI is recognized for its computational efficiency and explicit distribution form \citep{blei2017variational}. Contemporary VI-based methods such as variational autoencoder (VAE) \citep{kingma_auto-encoding_2014} have garnered interest for learning representations of complex, high-dimensional data across fields like bioinformatics \citep{kopf2021mixture}, geoscience \citep{chen2022spatiotemporal}, and finance \citep{bergeron2022variational}.

Automatic Differentiation Variational Inference (ADVI) \citep{kucukelbir2017automatic} is a popular approach to derive variational inference algorithms for complex probabilistic models. Classic ADVI methods often adopt a parametric approach, approximating intractable posterior distributions with distributions from a specific probability distribution family (e.g., Gaussian distribution). However, it is limited to distributions allowing the reparametrization trick to calculate and backpropagate the gradient of the joint likelihood. Additionally, misspecified parametric assumptions can impair ADVI's efficacy to handle multimodal or skewed posteriors. 

Recent studies show that more flexible posterior approximations usually result in better performance \citep{kobyzev2020normalizing,pmlr-v51-han16}. In this paper, we aim to design a new type of variational inference based on spline approximation, named Spline Automatic Differentiation Variational Inference (S-ADVI), to improve the flexibility of posterior approximation while being interpretable. Spline approximation is an effective nonparametric tool for density estimation \citep{gu1993smoothing}. Theoretically, an arbitrary smooth density function can be well-approximated via weighted summation of a given sequence of spline bases. The shapes of spline bases are pre-specified and fixed, and the shapes of posterior distributions can be uniquely represented via the vector of spline coefficients. This property allows the assessment of the structure of posterior distributions and the interpretation of the latent representations. Consequently, the proposed S-ADVI achieves a balance of flexibility and parsimony. 

The proposed S-ADVI holds several merits over existing nonparametric methods.  First, a major limitation of nonparametric variational methods is the difficulty in providing theoretical guidance to recover the true posterior distribution. This paper theoretically investigates the asymptotic properties of the importance weighted autoencoder (IWAE), as well as the Kullback–Leibler (KL) divergence of spline approximations from the true posterior. Second, contrasted with other ADVI-based methods requiring pre-specified transformation to approximate distributions with bounded support, our approach simultaneously estimates the distribution support boundary. This adaptive boundary implementation enhances the accuracy and robustness of the model, ensuring superior performance. Last, the streamlined structure of the proposed method facilitates straightforward implementation and adaptability to various data structures.

In summary, our major contributions are:
(i) We design a novel nonparametric variational inference framework, S-ADVI, based on spline approximation to improve the flexibility of posterior approximation; (ii) Theoretical properties are established on the lower bound of IWAE and variational approximation errors of the proposed S-ADVI method; (iii) S-ADVI represents posterior distributions with deterministic vectors, allowing the assessment of shapes of posterior distributions and the interpretation of latent representation.    

\section{\uppercase{Background}}
\subsection{Variational Inferences}

Let $\bm x$ be the observed variables and $\bm z$ be the latent variables. We consider the joint distribution $p_{\theta}(\bm x, \bm z)$ for some parameter $\theta$, the generative model defined over the variables \citep{kingma_auto-encoding_2014}.  Learning $\theta$ typically requires the maximization of the marginal distribution of $\bm x$: 
$ p_\theta(\bm x) = \int p_\theta(\bm x | \bm z)p(\bm z) d \bm z,$
where $p(\bm z)$ is the prior of $\bm z$, and $p_\theta(\bm x|\bm z)$ is the conditional distribution of $\bm x$ given $\bm z$. 

Generally, the marginal likelihood function is intractable for flexible generative models. In VI, one common solution is to approximate the posterior $p(\bm z|\bm x)$ using a variational distribution $q_\phi(\bm z|\bm x)$ with $\phi$ being a collection of unknown parameters depending on the observed data $\bm x$. The problem then is transformed into maximizing the evidence lower bound (ELBO) $\mathcal{L}_{\text{ELBO}}\{\phi(\bm x)\}$ \citep{blei2017variational}:
\begin{align*}
&  \log p_\theta(\bm x) \ge  \mathcal{L}_{\text{ELBO}}\{\phi(\bm x)\},
\text{ where } \mathcal{L}_{\text{ELBO}}\{\phi(\bm x)\} \triangleq \mathbb{E}_{\bm z \sim q_\phi(\bm z| \bm x)} \left[\log \frac{p_\theta(\bm x |\bm z) p(\bm z)}{q_\phi(\bm z|\bm x)}\right].% \label{eq:ELBO}
\end{align*}

A tighter bound derived from importance weighting, namely importance-weighted autoencoder (IWAE) \citep{DBLP}, is then proposed with a strictly tighter log-likelihood lower bound than ELBO. To calculate an importance-weighted estimate of the log-likelihood,  $T$ independent samples are drawn from the posterior $\left\{\bm z_t\right\}_{t=1}^T \sim q_\phi(\bm z| \bm x)$, and a lower bound is then calculated as the log of the average of the ratio of the joint distribution and posterior for each sample:
\begin{align}
  \label{EQU:IWAE} 
   \mathcal{L}_{\text{IWAE}}\{\phi(\bm x)\} 
\triangleq \mathbb{E}_{\left\{\bm z_t \sim q_\phi(\bm z| \bm x)\right\}_{t=1}^T}\left[\log \frac{1}{T} \sum_{t=1}^T \frac{p_\theta\left(\bm x|\bm z_t\right) p\left(\bm z_t\right)}{q_\phi\left(\bm z_t| \bm x\right)}\right].
\end{align}
While tighter bound may not always be the best option \citep{rainforth2018tighter}, previous works suggest that multiple samples from the posterior help IWAE to be well-adapted to multimodal distributions and to approximate complex posteriors \citep{DBLP, morningstar2021automatic}. In this paper, we adopt IWAE as the objective function for its overall better properties. 

\subsection{Spline Approximation}
\label{SEC:Spline-App}

\begin{figure*}[!tb] % only for (N_{explore}, N_{evaluate}) = (30,15)
    \centering
    \includegraphics[width=0.95\textwidth]{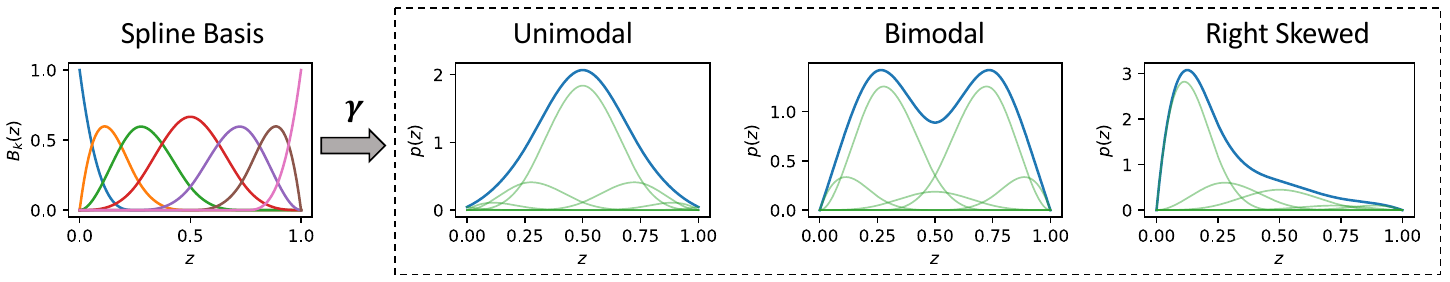}
    \vskip -0.1in
	\caption{\capitalisewords{illustration of the density functions based on different linear combinations of spline basis functions}} \label{fig:spline}
\end{figure*}

% \tf{Revise this part.}
Spline approximation provides a method for approximating complex curves with a modest set of parameters, thereby achieving computational efficiency and having been widely used in machine learning and statistical learning areas, including generalized additive models \citep{hastie2017generalized}, functional data analysis \citep{fda-review}, longitudinal data analysis \citep{anderson1995smoothing}, neural networks \citep{pmlr-v80-balestriero18b,fakhoury2022exsplinet}, and point process intensity estimation \citep{loaiza-ganem2019deep}.

Let $\bs{\upsilon}$ be a partition of the interval $\mathcal{T} = [\upsilon_0, \upsilon_{H+1}]$ with $H$ interior knots, where $\bs{\upsilon}=\left\{\upsilon_{0}<\upsilon_{1}<\cdots <\upsilon_{H}<\upsilon _{H+1}\right\}$. Any spline function $s(z)$ within the spline space $\mathcal{U}$ of order $\varrho+1$ satisfies that: 1) the function $s(z)$ is a polynomial function with $\varrho$-degree (or less) on intervals $[\upsilon _{h},\upsilon_{h+1})$, $h=0,\ldots ,H$ and $\left[ \upsilon_{H},\upsilon_{H+1}\right]$; 2) it has $\varrho-1$ continuous derivatives over the entire region $\mathcal{T}$. Consider $\{B_{1,\mathcal{T}}(z),\ldots ,B_{K,\mathcal{T}}(z)\}^{\top}$ as a vector of the spline basis functions with degree $\varrho$ and partition $\bs{\upsilon}$, where $K = H + \varrho+1$. For the sake of notation simplicity,  for the rest of the paper, we define the normalized spline basis by $b_{k,\mathcal{T}}(z) \triangleq B_{k,\mathcal{T}}(z)/a_{k,\mathcal{T}}$, where $a_{k,\mathcal{T}} = \int_{\mathcal{T}}B_{k,\mathcal{T}}(z) dz$. It implies that $\int_{\mathcal{T}} b_{k,\mathcal{T}}(z) dz = 1$. Let $\bm b_\mathcal{T}(z)=\{b_{1,\mathcal{T}}(z),\ldots ,b_{K,\mathcal{T}}(z)\}^{\top}$. All the spline basis functions $b_{k,\mathcal{T}}(z)$ are nonnegative; therefore, they are all valid probability density functions.  For any polynomial spline $s(z)$, it can be uniquely represented via a linear combination of spline basis functions, that is, $s(z) = \sum_{k=1}^K \gamma_kb_{k,\mathcal{T}}(z)$.

Define $\mathcal{H}^{(\varrho)}(\mathcal{T})$ as the space of functions $\psi$ on $\mathcal{T}$ whose $\nu$-th derivative exists and satisfies a Lipschitz condition of order $\delta:\left|\psi^{(\nu)}(z)-\psi^{(\nu)}\left(z^{\prime}\right)\right| \leq C_{v}\left|z-z^{\prime}\right|^{\delta}$, for $z, z^{\prime} \in \mathcal{T}$ and $\varrho = \nu + \delta$. The following Lemma \ref{LEM:approxi} can quantify the approximation power of polynomial splines for functions within $\mathcal{H}^{(\varrho)}(\mathcal{T})$, suggesting smooth functions can be well approximated when the knot number increases to infinity. 
\begin{lemma}[\cite{schumaker2007spline}]
\label{LEM:approxi}
For any function  $\psi \in \mathcal{H}^{(\varrho)}(\mathcal{T})$, there exists a spline $\psi^{\ast} \in \mathcal{U}$, such that
$\sup_{z \in \mathcal{T}}|\psi^{\ast}(z)-\psi(z)|\leq C H^{-(\varrho+1)}$ for some positive constant $C$.
\end{lemma}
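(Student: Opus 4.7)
The plan is to construct the spline approximant $\psi^\ast$ via a quasi-interpolation operator on the B-spline basis and bound the pointwise error subinterval by subinterval using local polynomial approximation; this is the standard route to bounds of this type in Schumaker's monograph.

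First, I would invoke the existence of a de Boor--Fix quasi-interpolant $Q : C(\mathcal{T}) \to \mathcal{U}$ of the form
\begin{equation*}
Q\psi \;=\; \sum_{k=1}^{K} \lambda_k(\psi)\, B_{k,\mathcal{T}},
\end{equation*}
with dual functionals $\lambda_k$ chosen so that $Q$ reproduces every polynomial $p$ of degree at most $\varrho$, i.e.\ $Qp = p$. By the local support of B-splines (each $B_{k,\mathcal{T}}$ is supported on at most $\varrho+2$ consecutive subintervals), the restriction of $Q\psi$ to $I_h := [\upsilon_h, \upsilon_{h+1}]$ depends only on $\psi$ over a slight enlargement $I_h^\ast \supseteq I_h$ of length $O(H^{-1})$. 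Set $\psi^\ast := Q\psi \in \mathcal{U}$.

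Second, polynomial reproduction combined with the triangle inequality yields, for any polynomial $p$ of degree at most $\varrho$,
\begin{equation*}
\sup_{z \in I_h} \bigl|\psi(z) - \psi^\ast(z)\bigr| \;\leq\; (1 + \kappa_\varrho)\, \inf_{p}\, \sup_{z \in I_h^\ast} \bigl|\psi(z) - p(z)\bigr|,
\end{equation*}
where $\kappa_\varrho$ is a uniform bound on the local operator norm of $Q$ depending only on $\varrho$; this uniform bound is a standard consequence of Marsden-type stability estimates for B-splines.

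Third, since $\psi \in \mathcal{H}^{(\varrho)}(\mathcal{T})$ with $\varrho = \nu + \delta$, the Hölder bound $|\psi^{(\nu)}(z) - \psi^{(\nu)}(z')| \leq C_\nu |z - z'|^\delta$ together with a Taylor/Whitney-type local polynomial approximation argument on the short interval $I_h^\ast$ gives a local best-polynomial error of the order stated in the lemma. Substituting into the previous display and taking the maximum over the finite family $\{I_h\}_{h=0}^{H}$ yields the uniform bound on all of $\mathcal{T}$, with the constant $C$ absorbing $\kappa_\varrho$, $C_\nu$, and factorial factors. The main obstacle is the construction and quantitative analysis of a quasi-interpolant satisfying both polynomial reproduction and uniform local $L^\infty$-boundedness; since these properties are precisely the content of the standard B-spline approximation machinery in Schumaker's book (the reference cited in the statement), a fully self-contained proof would essentially reproduce that theory rather than introduce new ideas.
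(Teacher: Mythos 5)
The paper offers no proof of this lemma at all---it is imported as a black box from \cite{schumaker2007spline}---so there is no internal argument to compare against. Your route (a de Boor--Fix quasi-interpolant reproducing polynomials of degree $\varrho$, uniform local boundedness via Marsden-type stability, and a local Jackson/Whitney estimate patched over the subintervals) is exactly the standard machinery by which the cited monograph establishes results of this type, and your first two steps are sound as sketched.

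The genuine gap is in your third step, which is the one quantitative computation that cannot be waved away. With the paper's definition of $\mathcal{H}^{(\varrho)}(\mathcal{T})$---$\psi^{(\nu)}$ exists and is H\"older of order $\delta$ with $\varrho = \nu + \delta$---the Taylor/Whitney argument on an interval $I_h^{\ast}$ of length $O(H^{-1})$ gives
\begin{equation*}
\inf_{\deg p \le \varrho}\;\sup_{z \in I_h^{\ast}} \bigl|\psi(z) - p(z)\bigr| \;\le\; \frac{C_{\nu}}{\nu!}\,|I_h^{\ast}|^{\nu+\delta} \;=\; O\bigl(H^{-\varrho}\bigr),
\end{equation*}
not $O(H^{-(\varrho+1)})$: expanding $\psi$ to order $\nu$ about a point of $I_h^{\ast}$ and bounding the remainder by the H\"older condition on $\psi^{(\nu)}$ yields exactly $\nu+\delta=\varrho$ powers of the mesh size, and no more. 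Your assertion that this step ``gives a local best-polynomial error of the order stated in the lemma'' is therefore not delivered by the argument you describe; obtaining $H^{-(\varrho+1)}$ requires one additional order of smoothness (e.g.\ $\psi^{(\nu+1)}$ H\"older-$\delta$, i.e.\ smoothness $\varrho+1$ on the H\"older scale), which is how Schumaker's Jackson-type theorem actually pairs regularity with rate. You should either carry the exponent through explicitly and conclude with $CH^{-\varrho}$ for the class as defined, or flag that the claimed rate needs the hypothesis strengthened by one derivative; as written, the sketch silently passes over the only place where the exponent in the lemma has to be earned.
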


\begin{remark}
\label{Remark1}
According to Lemma \ref{LEM:approxi}, for density function $p(z) \in \mathcal{H}^{(\varrho)}(\mathcal{T}^o)$, where $\mathcal{T}^o \subseteq \mathbb{R}$ could be either finite or infinite support, there exists $\widetilde{p}(z) \in \mathcal{H}^{(\varrho)}(\mathcal{T})$  such that is $\widetilde{p}(z)$ a valid density function and $\sup_{z \in \mathcal{T}^o}|\widetilde{p}(z)-p(z)|\leq C H^{-(\varrho+1)}$.
\end{remark}

\section{\uppercase{Nonparametric Posterior Approximation with Spline}}

Remark \ref{Remark1} emphasizes the capability of spline functions to approximate complex distributions provided sufficient interior knots. To enhance the approximation of posteriors with arbitrary shapes, we introduce Spline Automatic Differentiation Variational Inference (S-ADVI). This nonparametric approach aims to represent posteriors as spline functions, allowing for more flexible and accurate modeling.

\subsection{Spline Automatic Differentiation Variational Inference (S-ADVI)}
Following the framework of nonparametric Bayesian inference, we assume the true posterior $p({ z|\bm x})$ is within an infinite dimensional space such that $p({z|\bm x}) \in \mathcal{H}^{(\varrho)}(\mathcal{T}^o)$. Therefore, according to Lemma \ref{LEM:approxi} and Remark \ref{Remark1}, the posterior distribution of a latent variable $z$, $p(z| \bm x)$ can be well approximated by a spline function: $q_\phi(z|\bm x) = \sum_{k=1}^{K} \gamma_{k}(\bm x) b_{k,\mathcal{T}}(z) = \bm b (z)^{\top} \bs{\gamma}(\bm x)$. See Figure \ref{fig:spline} to illustrate the density functions based on the linear combinations of spline basis functions. By the definition of normalized spline basis in Section \ref{SEC:Spline-App}, 
$\int_{\mathcal{T}} b_{k,\mathcal{T}}(z) dz =1$, for $k=1, \ldots, K$. Therefore, to ensure that $\int_\mathcal{T} q_\phi(z|\bm x) dz = 1$ and $q_\phi(z|\bm x) > 0$ for $z \in \mathcal{T}$, the spline coefficients must satisfy $\gamma_{k}(\bm x) \geq 0$ and $\sum_{k=1}^K\gamma_{k}(\bm x) = 1$. For notation simplicity, we denote $b_k(z) = b_{k, [0,1]} (z)$ for the rest of paper. We consider the mean-field assumption and assume the latent variables to be independent of each other.  For $j$-th latent variable, we use $\mu_j(\bm x)$ and $\sigma_j(\bm x)$ for location-scale transformations for latent variable $z_j$, such that $z_j = \mu_j(\bm x) + \sigma_j(\bm x) \epsilon_j$, and $\epsilon_j \in [0,\,1]$ is a random variable. The location-scale transformation allows for adaptive supports of posteriors on $[\mu_j(\bm x), \mu_j(\bm x) + \sigma_j(\bm x)]$, where $\mu_j(\bm x)$ and $\sigma_j(\bm x)$ are unknown parameters to be estimated. The proposed posterior is determined by unknown parameters $\mu_j(\bm x)$, $\sigma_j(\bm x)$, and the spline coefficients $\{\gamma_{jk}(\bm x), k = 1, \ldots, K\}$, which capture the location, scale, and shape of the distribution. With pre-specified spline basis functions in the S-ADVI, we can use spline coefficients to represent the shape of approximated posteriors. {Section \ref{SEC:Single-Column} demonstrates using spline coefficients to investigate the relationship between the shape of posteriors and input features, enhancing the model interpretation.} Let $J$ be the total number of latent variables. Collectively, for the vector of latent variables $\bm z = \{z_1, \ldots, z_J\}$, the posterior $p\left(\bm z| \bm x\right)$ can be approximated by
\begin{align}
\label{EQU:S-ADVI}
   q_\phi\left(\bm z| \bm x\right) =  \prod_{j=1}^J\sigma^{-1}_j(\bm x) \cdot q_\phi\left(\bm \epsilon| \bm x\right) = 
   \prod_{j=1}^J \frac{1}{\sigma_j(\bm x)} \cdot \sum_{k=1}^K \gamma_{jk}(\bm x) b_k\left(\frac{z_j-\mu_j(\bm x)}{\sigma_j(\bm x)}\right),
\end{align}
where $\bm \epsilon = \{\epsilon_1, \ldots, \epsilon_J\}$ is the vector of latent variables before the location-scale transformation. In the S-ADVI, the parameters of approximation family are $\phi = \{\mu_j(\bm x), \sigma_j(\bm x), \gamma_{jk}(\bm x), k = 1, \ldots, K, j=1, \ldots, J\}$.
The objective of the proposed S-ADVI is to maximize the IWAE defined in (\ref{EQU:IWAE}), where the term $q_\phi\left(\bm z_t| \bm x\right)$ is given in (\ref{EQU:S-ADVI}). The spline degree $\varrho$, and the number and values of interior knots are hyperparameters to be specified. In the numerical studies, we choose the cubic spline ($\varrho = 3$) with equal-space knots, which is commonly used in nonparametric model estimation \citep{hastie2017generalized,Yu:etal:20}. The influence of the number of interior knots is evaluated in the experiments in Section \ref{SEC:Single-Column}.

\begin{remark}
It is possible to get correlated 
$\bm z$ by taking a linear transformation 
$\bm z = \bm \mu + \mathbf{\Sigma} \bm \epsilon$, where $\mathbf{\Sigma}$ is an unknown covariance matrix and the random variables in $\bm \epsilon$ are independent of each other. For complex, nonlinear dependency between components, multivariate posterior estimation is viable through multivariate spline approximation to capture relationships between latent variables, which is considered as a  future study. 

\end{remark}

\subsection{Model Estimation}
\label{SEC:modelestimation}
The spline posterior approximation can be regarded as a mixture of density functions based on spline bases. The stratified ELBO (SELBO)/IWAE (SIWAE) \citep{roeder2017sticking, morningstar2021automatic} are common methods to optimize ELBO/IWAE with mixture densities. However, in S-ADVI, directly applying the stratified techniques is challenging, which involves the summation of products of all the combinations of spline coefficients and latent variables. When the number of latent variables is large, the stratified methods can be computationally expensive.

We tackle the above-mentioned challenge via the concrete distribution \citep{maddison2017concrete}, which can be used as an approximation to the categorical distribution. The concrete distribution has two parameters, $\bm \alpha \in (\mathbb{R}^{+})^K$ and $\Lambda$, where $\sum_k \alpha_k=1$. When $\Lambda \rightarrow 0$, the concrete distribution approaches the categorical distribution with the event probability vector being $\bm \alpha$. However, if $\Lambda$ is fixed low, the concrete approximation cannot explore different combinations of spline bases, leading to poor model estimation. We use an annealing approach \citep{abid2019concrete}, where we start model training with a high $\Lambda=\Lambda_0$, and gradually reduce $\Lambda$ after each epoch. In this paper, we use $\Lambda(c)=\Lambda_1+ (\Lambda_0-\Lambda_1)e^{-c/\eta}$ to smoothly reduce the temperature, where $\Lambda_1$ is the final temperature, $c$ denotes the current epoch number, and $\eta$ controls the speed of decay. As shown in the experiments (Section \ref{app_sec:annealing_exp} in supplementary material), S-ADVI is not sensitive to the choice of annealing functions.

To this end, we summarize the Stochastic Backpropagation \citep{rezende2014stochastic} for estimating S-ADVI.
\noindent \textbf{Generating random samples from mixture models.} One key component of the proposed S-ADVI method is to generate random samples from a mixture model with distribution $q_\phi\left(z_j | \bm x\right) = \sum_{k=1}^K \gamma_{jk}(\bm x) \widetilde{b}_{k,\mathcal{T}}(z_j; \bm x)$, where $\widetilde{b}_{k,\mathcal{T}}(z_j; \bm x) = \sigma_j(\bm x)^{-1} b_k\left\{\sigma_j^{-1}(\bm x)[z_j-\mu_j(\bm x)]\right\}$. A hierarchical approach to generate random samples from mixture models involves two steps: generating random samples for distributions $\widetilde{b}_{k,\mathcal{T}}(z_j; \bm x)$ and randomly selecting one sample with probability $\gamma_{jk}(\bm x)$ for each $z_j$. However, it is not straightforward to sample from $\widetilde{b}_{k,\mathcal{T}}(z_j; \bm x)$ and apply the reparameterization trick to the categorical distribution. Utilizing the pre-specified spline bases, with concrete approximation, at each iteration, we consider the following procedures:

\begin{enumerate}[noitemsep,topsep=0pt]
\setlength\itemsep{0.1em}
    \item Use the Metropolis-Hastings algorithm to generate a sequence of random samples from the distribution $b_{k}(\epsilon_j)$ and then randomly pick $w_{jk}$ from generated samples.
    \item Generate random sample $\bm u_j$ from a concrete distribution with $\Lambda=\Lambda(c)$ and $\alpha_{jk}=\gamma_{jk}(\bm x)$. 
    \item Define $\epsilon_j = \sum_{k=1}^{K} u_{jk}w_{jk}.$
    The property of concrete distribution guarantees that when $\Lambda(c) \rightarrow \Lambda_1$ as $c$ increases, the procedure well approximates the discrete hierarchical sampling process.
\end{enumerate}

\noindent \textbf{Backpropagation with reparameterization trick.} We aim to differentiate the objective function w.r.t. the parameters $\phi$ via a Monte Carlo approximation.   The Monte Carlo approximation is based on the random samples generated from the mixture of spline basis functions. To obtain the differentiation, we consider the following reparameterization trick for our objective function $\mathcal{L}_{\text{IWAE}}\{\phi(\bm x)\}$:
\begin{align}
	     \label{EQU:backpropagation}   
      \mathbb{E}_{\left\{\bm \epsilon_t\right\}_{t=1}^T}\left[\log \frac{1}{T} \sum_{t=1}^T \frac{p_{\theta}\left\{\bm x,\bm \mu(\bm x) + \bm \sigma (\bm x) \cdot \bm \epsilon_t \right\}}{\prod_{j=1}^J\left\{\sum_{k=1}^K \gamma_{jk}(\bm x) b_k\left(\epsilon_{jt}\right) \right\}}\right] +  \sum_{j=1}^J\log \sigma_j(\bm x),
\end{align}
where $\bm \epsilon_t = \{\epsilon_{1t}, \ldots, \epsilon_{Jt}\}^{\top}$ are variables generated from $\sum_{k=1}^{K} \gamma_{jk}(\bm x) b_{k}(\epsilon_{jt})$,  $\bm \mu(\bm x) = \{\mu_j(\bm x), j = 1, \ldots, J\}$ and $\bm \sigma(\bm x) = \{\sigma_j(\bm x), j = 1, \ldots, J\}$ are vectors of location and scale parameters. The $\log \sigma_j(\bm x), j =1, \ldots, J$ terms in (\ref{EQU:backpropagation}) prevents the model from degenerating into a deterministic model. Derivative of (\ref{EQU:backpropagation}) can be found in Section \ref{apdx:reparam_trick} of the supplementary material.

\noindent \textbf{Penalized spline.} In nonparametric smoothing, penalized spline captures intricate data patterns with regularization (roughness penalty) to prevent overfitting and manage the complexity of the fitted function \citep{Wood:03}. %Penalized splines are based on the idea that the unknown function is modeled by a spline with a large number of knots, allowing for a high degree of flexibility. The roughness penalty is used to control the complexity of the fitted curve and avoid overfitting. 
Here, we consider a roughness penalty for a spline function $s(\cdot)$, defined as $\mathcal{E}(s) = \int_{\mathcal{T}} \{s''(t)\}^2dt$, to control the complexity of the fitted curve and avoid overfitting. According to properties of spline polynomials, $\mathcal{E}(s) = \bm \gamma^{\top} \mathbf{P} \bm \gamma$, where the matrix $\mathbf{P}$ is a $K$ by $K$ positive definite matrix, see Section \ref{SEC:Implementation} of the supplementary material for the detailed definition.  Then, the objective function becomes $\mathcal{L}^P_{\text{IWAE}}\{\phi(\bm x)\} = \mathcal{L}_{\text{IWAE}}\{\phi(\bm x)\}+ \lambda \bm \gamma^{\top} \mathbf{P}\bm \gamma.$

\section{\uppercase{Properties of S-ADVI}}

While existing works have demonstrated that the spline approximation has an upper bound on the approximation error for any function within the functional space $\mathcal{H}^{(\varrho)}(\mathcal{T}^o)$ (Lemma \ref{LEM:approxi}), it is worth examining the approximation power of the proposed S-ADVI based on spline approximation. In this section, we start with the bound of IWAE (Theorem \ref{THE:ELBO}), which implies the bound of KL divergence of the spline density approximation from the true posterior (Theorem \ref{THE:Approximation}). Then, Theorem \ref{THE:Approximation-Regression} quantifies the posterior approximation error between the S-ADVI estimator and true posterior.

We first state the necessary assumptions to facilitate our theoretical studies. 
\begin{itemize}[noitemsep,topsep=0pt]
    \item[\textit{(A1)}] The prior $p(\bm z)$ and the likelihood function $p_{\theta}(\bm x| \bm z)$ are bounded over the support regions. 
    \item[\textit{(A2)}] The true posteriors can be fully factorized, that is, $p(\bm z| \bm x) = \prod_{j=1}^J p(z_j | \bm x)$. For $j=1, \ldots, J$, the posterior $p(z_j|\bm x) \in \mathcal{H}^{(\varrho)}(\mathcal{T}^o)$. There exists some region $\mathcal{T}^{\ast} \subset \mathcal{T}^o$ such that $\int_{\mathcal{T}^o - \mathcal{T}^{\ast}} p(z_j | \bm x) dz_j < \epsilon$ for some $\epsilon >0$. In addition, the posterior $p(z_j| \bm x)$ are bounded by some constant over $\mathcal{T}^{\ast}$. 
    \item[\textit{(A3)}] There exist constants $C_1$ and $C_2$ such that $C_1 H^{-1} \leq \upsilon_{h} - \upsilon_{h-1} \leq C_2 H^{-1}$ for $1\leq h \leq H$. 
\end{itemize}
\vspace{1ex}

\begin{remark}
    Assumption (A1) is a mild assumption on the prior $p(\bm z)$ and the likelihood function, which can be easily satisfied.  Assumptions (A2) -- (A3) are typical assumptions under the framework of spline approximation \citep{wang2009spline,Yu:etal:20}. Assumption (A3) assumes that the true posteriors can be fully factorized. For more general cases, the true posterior can be factorized into $\prod_{m=1}^M p(\bm z_{s_m}|\bm x)$, where $s_m$ is an index set of the latent variables and $\cup_{m=1}^M s_m = \{1, \ldots, J\}$. Applying the functional ANOVA results in \citep{stone1994use}, we can derive the $L_2$ approximate rate is $H^{-(\varrho^{\ast} + 1)}$, where the $\varrho^{\ast}$ is a suitably lower bound to the smoothness of the components $p(\bm z_{s_m}|\bm x), m =1, \ldots, M$. The above approximation result is a general form of multivariate posterior approximation, allowing complex interactions between latent variables. 
\end{remark}

\setcounter{theorem}{0}

Lemma \ref{THE:ELBO} shows that the proposed S-ADVI allows us to quantify the lower bound of IWAE.
 \begin{lemma}
 \label{THE:ELBO}
  Under Assumptions (A1) -- (A3), the optimal IWAE is bounded by $\log p_{\theta}(\bm x) - C J H^{-(\varrho+1)} - J \epsilon$, where $C$ is some positive constant.
\end{lemma}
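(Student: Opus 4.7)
The plan is to exhibit a specific variational distribution in the spline family that achieves the stated lower bound; since the optimum over $\phi$ is no worse than the value at any particular admissible $\phi$, this suffices. A natural candidate is the fully factorized distribution $\widetilde{q}(\bm z|\bm x) = \prod_{j=1}^J \widetilde{p}(z_j|\bm x)$, where each $\widetilde{p}(z_j|\bm x) \in \mathcal{H}^{(\varrho)}(\mathcal{T})$ is a valid spline density approximating the true marginal posterior $p(z_j|\bm x)$, whose existence is guaranteed by Lemma~\ref{LEM:approxi} and Remark~\ref{Remark1}. By Assumption~(A2) this factorization is compatible with the true posterior, so $\widetilde{q}$ is a legitimate element of the S-ADVI class.

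The first reduction is to replace IWAE with ELBO via the standard inequality $\mathcal{L}_{\text{IWAE}}\{\phi(\bm x)\} \geq \mathcal{L}_{\text{ELBO}}\{\phi(\bm x)\}$, which follows from Jensen's inequality applied to the average of importance weights inside the expectation in~(\ref{EQU:IWAE}). Combined with the identity $\mathcal{L}_{\text{ELBO}}(\widetilde{q}) = \log p_\theta(\bm x) - \mathrm{KL}(\widetilde{q} \,\|\, p(\cdot|\bm x))$, the problem reduces to establishing $\mathrm{KL}(\widetilde{q}(\bm z|\bm x) \,\|\, p(\bm z|\bm x)) \leq CJH^{-(\varrho+1)} + J\epsilon$.

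I would then use the mean-field factorization of both $\widetilde{q}$ and of $p(\bm z|\bm x)=\prod_j p(z_j|\bm x)$ (Assumption~(A2)) to decompose the joint KL as $\sum_{j=1}^J \mathrm{KL}(\widetilde{p}(z_j|\bm x) \,\|\, p(z_j|\bm x))$, so it suffices to bound each one-dimensional KL by $CH^{-(\varrho+1)} + \epsilon$. For a single coordinate I would split the integration region into the high-probability window $\mathcal{T}^\ast$ and its complement. On $\mathcal{T}^\ast$, the uniform approximation rate $\sup_{z\in\mathcal{T}^\ast}|\widetilde{p}(z_j|\bm x) - p(z_j|\bm x)| \leq CH^{-(\varrho+1)}$ from Remark~\ref{Remark1}, together with the boundedness of $p(z_j|\bm x)$ from Assumption~(A2), lets me expand $\log(\widetilde{p}/p) = \log(1 + (\widetilde{p}-p)/p)$ and control it by a first-order Taylor remainder of order $H^{-(\varrho+1)}$, giving that region's contribution the claimed rate. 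On the tail $\mathcal{T}^o \setminus \mathcal{T}^\ast$, Assumption~(A2) caps the probability mass by $\epsilon$, so the contribution $\int_{\mathrm{tail}} \widetilde{p}\,\log(\widetilde{p}/p)\,dz$ is at most a constant multiple of $\epsilon$.

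The main obstacle will be controlling $\log(\widetilde{p}/p)$ at points where $p(z_j|\bm x)$ is small, since a naive Taylor expansion of $\log(1+(\widetilde{p}-p)/p)$ blows up as $p\to 0$. I expect to resolve this either by invoking a lower-boundedness hypothesis on $p(z_j|\bm x)$ over $\mathcal{T}^\ast$---implicit in the spline-approximation literature cited in the paper and absorbable into the constant $C$---or by replacing the Taylor argument with a $\chi^2$-type majorization $\mathrm{KL}(\widetilde{p}\,\|\,p) \leq \int (\widetilde{p}-p)^2/p\,dz$ together with the uniform $L^\infty$ rate from Remark~\ref{Remark1}. Summing the $J$ coordinatewise bounds yields the claimed $CJH^{-(\varrho+1)} + J\epsilon$ estimate, which together with the IWAE--ELBO inequality completes the proof.
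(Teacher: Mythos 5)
Your proposal is correct and follows essentially the same route as the paper's proof: exhibit the spline approximant guaranteed by Remark \ref{Remark1}, reduce IWAE to ELBO, handle unbounded support by truncation (contributing the $\log(1-\epsilon)$, hence $J\epsilon$, term), and use the factorization in Assumption (A2) to obtain the factor $J$. The one obstacle you flag---controlling $\log(\widetilde{p}/p)$ where the posterior is small---is present just as implicitly in the paper's own argument (the denominator $p(z_t|\bm x)-\Delta(z_t)$ in its display must be bounded away from zero), so your proposed fixes are no weaker than what the paper tacitly assumes.
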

Theorem \ref{THE:Approximation} quantifies the variational approximation error with respect to the class defined in (\ref{EQU:S-ADVI}). See Section \ref{SEC:THE:Approximation-Proof} in the supplementary material for the detailed proof.
 
 \begin{theorem}
 \label{THE:Approximation} Under Assumptions (A1) -- (A3), the difference between the true posterior and the spline estimator is bounded by the order of $H^{-(\varrho+1)}$, that is, there exists a constant $C$, such that $D_{KL}\{q_{\phi(\bm x)}(\bm z)||p(\bm z| \bm x)\} \leq CJ(H^{-(\varrho+1)}+\epsilon)$.
\end{theorem}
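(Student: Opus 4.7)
The plan is to bound the KL divergence by exhibiting an explicit member of the S-ADVI class (\ref{EQU:S-ADVI}) whose divergence from the true posterior already meets the stated bound; since the theorem concerns what is achievable in the class, producing such a witness suffices, and the same construction is natural given Lemma \ref{THE:ELBO}. The first reduction uses Assumption (A2) together with the mean-field form of (\ref{EQU:S-ADVI}): both $p(\bm z|\bm x)$ and the candidate $q_{\phi(\bm x)}(\bm z)$ are product measures across the $J$ latent coordinates, so by tensorization of KL divergence,
\begin{equation*}
D_{\mathrm{KL}}\{q_{\phi(\bm x)}(\bm z)\,\|\,p(\bm z|\bm x)\} \;=\; \sum_{j=1}^{J} D_{\mathrm{KL}}\{q_{\phi,j}(z_j|\bm x)\,\|\,p(z_j|\bm x)\}.
\end{equation*}
It therefore suffices to exhibit, for each $j$, spline parameters $(\mu_j(\bm x),\sigma_j(\bm x),\{\gamma_{jk}(\bm x)\})$ producing a per-coordinate bound of order $H^{-(\varrho+1)}+\epsilon$; summing gives the claim.

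Fix $j$ and write $p_j\triangleq p(z_j|\bm x)$. Choose the location-scale pair $(\mu_j,\sigma_j)$ so that $[\mu_j,\mu_j+\sigma_j]\supseteq \mathcal{T}^{*}$, where $\mathcal{T}^{*}$ is the high-mass region supplied by Assumption (A2). Applying Lemma \ref{LEM:approxi} and Remark \ref{Remark1} to $p_j\in\mathcal{H}^{(\varrho)}(\mathcal{T}^o)$ yields a valid spline density $\widetilde{p}_j$ with $\sup_{z\in\mathcal{T}^o}|\widetilde{p}_j(z)-p_j(z)|\leq CH^{-(\varrho+1)}$. Because the normalized B-spline bases $b_k$ are nonnegative and integrate to one, expanding $\widetilde{p}_j$ after the location-scale transformation produces admissible coefficients $\widetilde{\gamma}_{jk}\geq 0$ with $\sum_{k}\widetilde{\gamma}_{jk}=1$, so $\widetilde{p}_j$ lies in the S-ADVI class of (\ref{EQU:S-ADVI}).

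It remains to bound $D_{\mathrm{KL}}(\widetilde{p}_j\,\|\,p_j)$ by splitting the integral into $\mathcal{T}^{*}$ and the tail $\mathcal{T}^o\setminus\mathcal{T}^{*}$. On $\mathcal{T}^{*}$, Assumption (A2) gives $p_j$ bounded above and, after a mild refinement keeping $\widetilde{p}_j$ strictly positive, also bounded below; combining the uniform rate $|\widetilde{p}_j-p_j|=O(H^{-(\varrho+1)})$ with $\log(1+x)\leq x$ then delivers $\int_{\mathcal{T}^{*}}\widetilde{p}_j\log(\widetilde{p}_j/p_j)=O(H^{-(\varrho+1)})$ on this part. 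On the tail, Assumption (A2) gives $\int_{\mathcal{T}^o\setminus\mathcal{T}^{*}} p_j<\epsilon$, and by the uniform rate the corresponding $\widetilde{p}_j$-mass is of the same order, so the residual KL contribution from this region is $O(\epsilon)$. Combining the two pieces and summing across $j$ yields $D_{\mathrm{KL}}\{q_{\phi(\bm x)}\|p(\cdot|\bm x)\}\leq CJ(H^{-(\varrho+1)}+\epsilon)$.

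The principal obstacle is the tail analysis on $\mathcal{T}^o\setminus\mathcal{T}^{*}$: there $p_j$ may be arbitrarily small or even vanish, so the log-ratio $\log(\widetilde{p}_j/p_j)$ is delicate and one must enforce absolute continuity of $\widetilde{p}_j$ with respect to $p_j$ to keep the divergence finite. This can be arranged by adding a vanishingly small uniform floor to $\widetilde{p}_j$ that does not spoil the $O(H^{-(\varrho+1)})$ bound on $\mathcal{T}^{*}$, and then converting the tail mass budget $\epsilon$ directly into an $O(\epsilon)$ KL contribution via the same $\log(1+x)\leq x$ inequality. This bookkeeping is precisely why the theorem's bound carries the additive $\epsilon$ term inherited from Assumption (A2).
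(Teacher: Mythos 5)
Your proof is essentially sound but takes a genuinely different route from the paper's. You argue directly: tensorize the KL over the $J$ coordinates, construct an explicit spline witness per coordinate via Lemma \ref{LEM:approxi}/Remark \ref{Remark1}, and bound each one-dimensional $D_{\mathrm{KL}}(\widetilde p_j\|p_j)$ by splitting into the high-mass region $\mathcal{T}^{*}$ (sup-norm rate plus $\log(1+x)\le x$) and the tail (the $\epsilon$ mass budget, which enters as $-\log(1-\epsilon)=O(\epsilon)$). The paper instead proves the bound indirectly through Lemma \ref{THE:ELBO}: it expands $\mathcal{L}_{\text{IWAE}}(\widehat\phi)$ around the true posterior to show $\mathcal{L}_{\text{IWAE}}(\widehat\phi)=\log p_\theta(\bm x)-D_{\mathrm{KL}}(q_{\widehat\phi}\|p)+o(JH^{-(\varrho+1)})$, then invokes the IWAE lower bound $\log p_\theta(\bm x)-CJH^{-(\varrho+1)}-J\epsilon$ to read off the KL bound. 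The substantive difference is \emph{which} density the bound applies to: your argument bounds the KL of a hand-picked witness $q_{\widetilde\phi}$, which establishes the approximation capacity of the class; the paper's argument bounds the KL of the IWAE maximizer $q_{\widehat\phi}$, which is the object the method actually returns, and the bridge from witness to maximizer (optimality of $\widehat\phi$ for IWAE combined with the approximate identity between IWAE and $\log p_\theta-D_{\mathrm{KL}}$) is exactly what your route omits. If the theorem is read as a statement about the class, your proof is more elementary and arguably cleaner; if it is read as a statement about the estimator, you need the paper's extra step. Two smaller cautions: nonnegativity of the B-spline basis does not by itself force the coefficients of a nonnegative approximating spline to be nonnegative, so membership of $\widetilde p_j$ in the constrained class $\{\gamma_{jk}\ge 0,\sum_k\gamma_{jk}=1\}$ needs a word (the paper glosses over this too); and your proposed fix of adding a floor to $\widetilde p_j$ goes the wrong way for $D_{\mathrm{KL}}(q\|p)$ --- enlarging the support of $q$ worsens absolute continuity with respect to $p$; what you actually need is $p_j$ bounded below on the chosen support $[\mu_j,\mu_j+\sigma_j]$.
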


We consider the posterior approximation based on the observed data $\{\bm x_1, \bm x_2, \ldots, \bm x_n\}$. When the observed data points $\bm x_i$ and $\bm x_{i'}$ are close enough, the corresponding posteriors $p(\bm z|\bm x_i)$ and $p(\bm z|\bm x_{i'})$ are close. For any given posterior $p(\bm z|\boldsymbol x)$ satisfying Assumption (A2), there exists a density function based on spline approximation $\prod_{j=1}^J \sum_{k=1}^K \gamma^\ast_{jk}(\boldsymbol x) b_{k, \mathcal{T}^\ast_j}(z_j)$ with $\mathcal{T}^\ast_j = [\mu^\ast_j(\boldsymbol x), \mu^\ast_j(\boldsymbol x) + \sigma^\ast_j(\boldsymbol x)]$ close to $p(\bm z|\boldsymbol x)$ with differences bounded by $JH^{-(\varrho+1)}$. Under some mild assumptions, $\gamma^\ast_{jk}(\boldsymbol x)$, $k=1,\ldots, K$, $\mu^\ast_j(\boldsymbol x)$, and $\sigma^\ast_j(\boldsymbol x)$ can be well approximated by nonparametric regression, such as the deep neural network. Specifically, the objective function can be formulated as $\sum_{i=1}^n \mathcal{L}_{\text{IWAE}}\{\phi(\bm x_i)\}$ and $\widehat{\phi}(\bm x) = \argmax_{\phi}\sum_{i=1}^n \mathcal{L}_{\text{IWAE}}\{\phi(\bm x_i)\}$, where $\phi(\bm x_i)$ is the collection of parameters of the S-ADVI estimators.

Theorem \ref{THE:Approximation-Regression} quantifies posterior approximation error generated from the nonparametric smoothing and theoretical differences between the S-ADVI estimator and true posterior. See Section \ref{SEC:THE:Approximation-Regression-Proof} in the supplementary material for the detailed proof.
 
 \begin{theorem}
 \label{THE:Approximation-Regression} Under Assumptions (A1) -- (A3), the average KL divergence of the spline estimator from the true posterior satisfies has 
\begin{align*}
\lim_{n \to \infty} \textrm{Pr}\Big[n^{-1} \textstyle \sum_{i=1}^n D_{KL}\{q_{\widehat{\phi}(\bm x_i)}(\bm z)||p(\bm z| \bm x_i)\}\leq CJ (H^{-2(\varrho +1)} + \epsilon^2 +  H^2  \Delta^2)\Big] = 1,
\end{align*}
 where $C$ is a positive constant and $\Delta$ is the $L_2$ estimation error of nonparametric regression for $\mu_j(\bm x)$, $\sigma_j(\bm x)$, and $\gamma_{jk}(\bm x), k = 1, \ldots, K, j=1, \ldots,J$.
\end{theorem}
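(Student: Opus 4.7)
The plan is a bias--variance decomposition. Let $\phi^\ast(\bm x)$ denote the ideal spline parameters whose existence is guaranteed by Theorem \ref{THE:Approximation}; then split
\[
q_{\widehat{\phi}(\bm x_i)}(\bm z) - p(\bm z|\bm x_i) = [q_{\widehat{\phi}(\bm x_i)}(\bm z) - q_{\phi^\ast(\bm x_i)}(\bm z)] + [q_{\phi^\ast(\bm x_i)}(\bm z) - p(\bm z|\bm x_i)],
\]
so that the first bracket is a regression/estimation error and the second is spline bias. The quadratic form of the target bound suggests first reducing KL divergence to a squared $L^2$ distance: under Assumption (A1) and the boundedness portion of (A2), a second-order Taylor expansion of $\log$ around $1$ yields $D_{KL}(q\|p) \le C\int_{\mathcal{T}^\ast}(q-p)^2/p\,d\bm z + O(\epsilon)$, where the $O(\epsilon)$ absorbs the $p$-mass outside $\mathcal{T}^\ast$. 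Under the mean-field factorization the joint KL decomposes into $J$ univariate KLs.

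For the spline-bias term, I would upgrade Theorem \ref{THE:Approximation} to the $L^2$ statement $\|q_{\phi^\ast(\bm x_i)} - p(\cdot|\bm x_i)\|_{L^2}^2 = O(H^{-2(\varrho+1)} + \epsilon^2)$, which follows immediately from integrating the squared uniform bound of Lemma \ref{LEM:approxi} over the bounded region $\mathcal{T}^\ast$ and adding the squared tail mass $\epsilon^2$.

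For the regression-error term, I would perform a first-order perturbation analysis of $q_\phi$ in $(\mu_j, \sigma_j, \gamma_{jk})$. The key ingredient is that the normalized basis satisfies $\|b_k\|_\infty = O(H)$ with support of width $O(H^{-1})$ and bounded overlap, so $\|q_\phi\|_\infty = O(H)$, giving a Lipschitz estimate
\[
\|q_{\widehat{\phi}(\bm x_i)} - q_{\phi^\ast(\bm x_i)}\|_{L^2}^2 \le C\, H^2\, \|\widehat{\phi}(\bm x_i) - \phi^\ast(\bm x_i)\|_2^2.
\]
Averaging over $i$ and invoking the assumed nonparametric-regression rate $n^{-1}\sum_i \|\widehat{\phi}(\bm x_i) - \phi^\ast(\bm x_i)\|_2^2 = O_P(\Delta^2)$ yields a bound of order $H^2\Delta^2$. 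Summing the bias and regression contributions across the $J$ latent dimensions and collecting terms produces the stated bound $CJ(H^{-2(\varrho+1)} + \epsilon^2 + H^2\Delta^2)$ with probability tending to one.

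The main obstacle is the displayed Lipschitz estimate. A naive first-order expansion in $\mu_j, \sigma_j$ brings in $b'_k$, whose $L^2$ norm is of order $H^{3/2}$, and would yield the undesirable scaling $H^3\Delta^2$. Recovering the $H^2$ factor requires either (i) exploiting the partition-of-unity constraint $\sum_k \gamma_{jk}(\bm x) = 1$ so that the leading derivative contributions telescope before one takes $L^2$ norms, or (ii) an $L^\infty$ perturbation argument, using $\|q_\phi\|_\infty = O(H)$ and bounds on $\widehat{\phi} - \phi^\ast$ pointwise in $\bm x_i$, followed by integration over the bounded support $\mathcal{T}^\ast$. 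A secondary technical point is the interplay between the mean-field factorization and the tail region $\mathcal{T}^o \setminus \mathcal{T}^\ast$, both of which draw on the $\epsilon$ slack from Assumption (A2) to avoid singularities in the $\chi^2$-type bound used in Step 1.
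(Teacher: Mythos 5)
Your overall architecture --- reducing KL to a squared $L^2$ (chi-square type) bound, splitting into spline bias plus parameter-estimation error around the ideal parameters $\phi^\ast$, and extracting the $H^{-2(\varrho+1)}+\epsilon^2$ and $H^2\Delta^2$ contributions from the spline basis norms --- is the same as the paper's (the paper's Lemma on $D_{\mathrm{KL}}(p\|q)\le \|p-q\|_2^2/\inf q$ plays the role of your Taylor expansion of $\log$, and its bound $\|q_{\widetilde\phi}-q_{\phi^\ast}\|^2\le CH\sum_{j,k}(\gamma^\ast_{jk}-\widetilde\gamma_{jk})^2=O(H^2\Delta^2)$ is exactly your Lipschitz step, obtained from $\|b_k\|_2^2=O(H)$ together with $K=O(H)$ coefficients per latent variable).

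However, there is a genuine gap at the point where you write $n^{-1}\sum_i\|\widehat\phi(\bm x_i)-\phi^\ast(\bm x_i)\|_2^2=O_P(\Delta^2)$ and call it ``the assumed nonparametric-regression rate.'' The quantity $\Delta$ is defined as the $L_2$ estimation error of a nonparametric regression estimator of the \emph{optimal} parameters $\mu^\ast_j,\sigma^\ast_j,\gamma^\ast_{jk}$; it says nothing a priori about $\widehat\phi$, which is defined as the maximizer of the summed variational objective, not as a regression fit to $\phi^\ast$. The paper closes this gap with a comparison (basic-inequality) argument that your proposal omits: introduce a competitor $\widetilde\phi$ obtained by nonparametric regression of $\phi^\ast(\cdot)$, which does satisfy the $\Delta$ rate by definition; then, because $\widehat\phi$ maximizes $\sum_i\mathcal{L}_{\text{ELBO}}\{\phi(\bm x_i)\}$ and $\mathcal{L}_{\text{ELBO}}\{\phi(\bm x_i)\}=\log p_\theta(\bm x_i)-D_{KL}\{q_{\phi(\bm x_i)}\|p(\bm z|\bm x_i)\}$, one gets
\[
n^{-1}\textstyle\sum_i D_{KL}\{q_{\widehat\phi(\bm x_i)}\|p(\bm z|\bm x_i)\}\;\le\; n^{-1}\textstyle\sum_i D_{KL}\{q_{\widetilde\phi(\bm x_i)}\|p(\bm z|\bm x_i)\},
\]
and only the right-hand side is then bounded by your bias--variance decomposition. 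Without this step your argument needs a separate (and nontrivial) identifiability/stability result converting closeness of objective values into closeness of parameters. On the other hand, your secondary worry about the $\mu_j,\sigma_j$ perturbations bringing in $b_k'$ and inflating $H^2$ to $H^3$ is a fair observation: the paper's displayed bound only accounts for the $\gamma$ differences and does not spell out the location--scale contribution, so resolving that point as you suggest (via the partition-of-unity constraint or an $L^\infty$ argument) would actually tighten the paper's own exposition.
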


\begin{remark}
    Theorem \ref{THE:Approximation} suggests increasing the number of interior knots $H$ can reduce the S-ADVI approximation errors. However, when applied to real data analysis, according to the results in Theorem \ref{THE:Approximation-Regression}, choosing the optimal number of interior knots balances approximation bias and estimation variance. In addition, the roughness penalty in penalized spline is used to avoid overfitting.
\end{remark}

\begin{remark} (\textit{Example of the convergence rate $\Delta$}) We assume that all the components are compositions of several functions. Suppose that $\phi(\boldsymbol x) = g_{q} \circ g_{q-1} \circ \cdots \circ g_{1} \circ g_{0},$ where $g_{i}:[a_{i}, b_{i}]^{d_{i}} \rightarrow [a_{i+1}, b_{i+1}]^{d_{i+1}}$. Denote by ${g_{ij}, j=1, \ldots, d_{i+1}}$ the components of $g_{i}$. Let $t_{i}$ be the maximal number of variables on which each of the $g_{i j}$ depends, and each $g_{i j}$ is a $t_{i}$-variate function. Then, the convergence rate is $\Delta = \max_{i=0, ..., q} n^{-(2 \beta_{i}) /(2 \beta_{i}+t_{i})},$ where $\beta_{i}, i=0, \ldots, q$ are degrees of Hölder smoothness conditions of functions $g_{ij}$ \citep{NonparametricRegression}.
\end{remark}

\begin{figure*}[!htbp]
\begin{center}
    \includegraphics[width=0.9\textwidth]{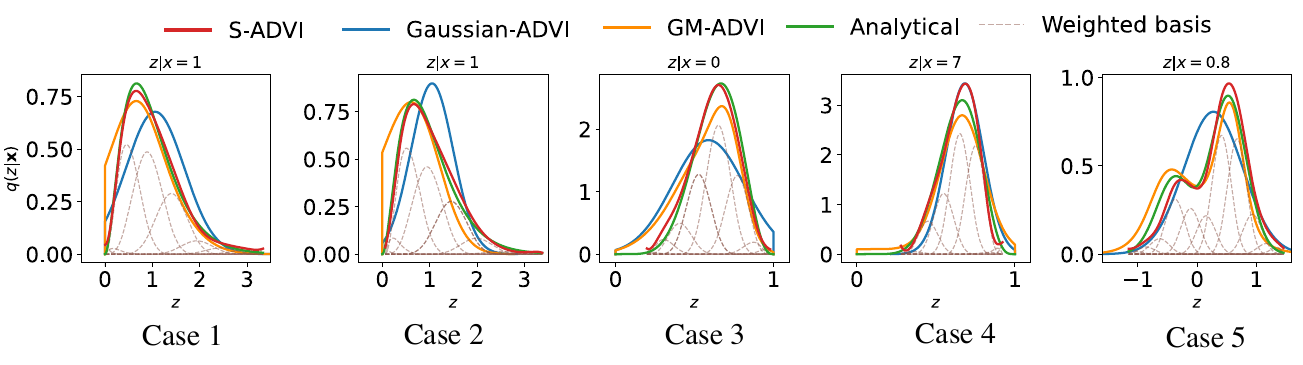}
    \vspace{-1ex}
\caption{\capitalisewords{Posterior approximation results with S-ADVI, Gaussian-ADVI, and GM-ADVI for Cases 1--5}}
\label{fig:toy_examples_1}
\end{center}
\end{figure*}

\section{\uppercase{Related Works}}

Variational inference has traditionally relied on parametric approximations, but efforts to enhance flexibility have led to various approaches. Gaussian mixture approximation offers a flexible posterior approximation but is susceptible to issues like posterior collapse \citep{Nonparametric2012}. Normalizing flow, employing invertible functions to model posterior distributions, provides an alternative flexible approach \citep{rezende2015variational, rezende2020normalizing, wu2020stochastic}. Neural Spline Flows utilizes monotonic and element-wise rational-quadratic spline as building blocks of normalizing flow \citep{durkan2019neural}. Boosting variational inference, a mixture-based approximation, offers flexibility but poses implementation and interpretation challenges \citep{locatello2018boosting, kobyzev2020normalizing}. On the other hand, the estimation of the density of the non-parametric kernel resembles a Gaussian mixture with numerous components \citep{Nonparametric2012}.

Implicit processes represent another avenue for facilitating flexible inferences utilized across Bayesian neural networks, neural samplers, and data generation frameworks. The enhancement of priors and posteriors through approximate inference techniques is well-documented \citep{ma2019variational, ortega2022deep, takahashi2019variational, shi2017kernel, molchanov2019doubly}. A notable method for training implicit models involves the nonparametric approximation of log density, known as the score estimator, which has been explored in recent studies \citep{KEF, Stein, shi2018spectral}. Furthermore, a comprehensive examination and convergence analysis of existing score estimators have been presented, offering a unified perspective on this methodology \citep{zhou2020nonparametric}. Despite these advancements, estimating implicit posteriors, especially in models characterized by high-dimensional latent variables, remains a significant challenge \citep{rodriguezsantana22a}.

Research on variational approximations has explored theoretical guarantees, convergence, optimization techniques, and model-specific analyses. Frequentist consistency has been considered \citep{wang2019frequentist, ZhangGao2020}, as well as overparameterized Bayesian Neural Networks \citep{huix2022variational}. Notably, existing theoretical studies have primarily focused on parametric distribution families.

\section{\uppercase{Results}}
In this section, we demonstrate the proposed method with experiments on both simulated and real datasets. All experiments are based on PyTorch 2.0 \citep{paszke2019pytorch} running on a Nvidia A100 80G GPU \footnote[1]{Example codes are available at: \url{https://github.com/TianshuFeng/SADVI_AISTATS2024}}. 

\subsection{Posterior Approximation}

\begin{figure*}[!tb] % only for (N_{explore}, N_{evaluate}) = (30,15)
    \centering
    \includegraphics[width=0.85\textwidth]{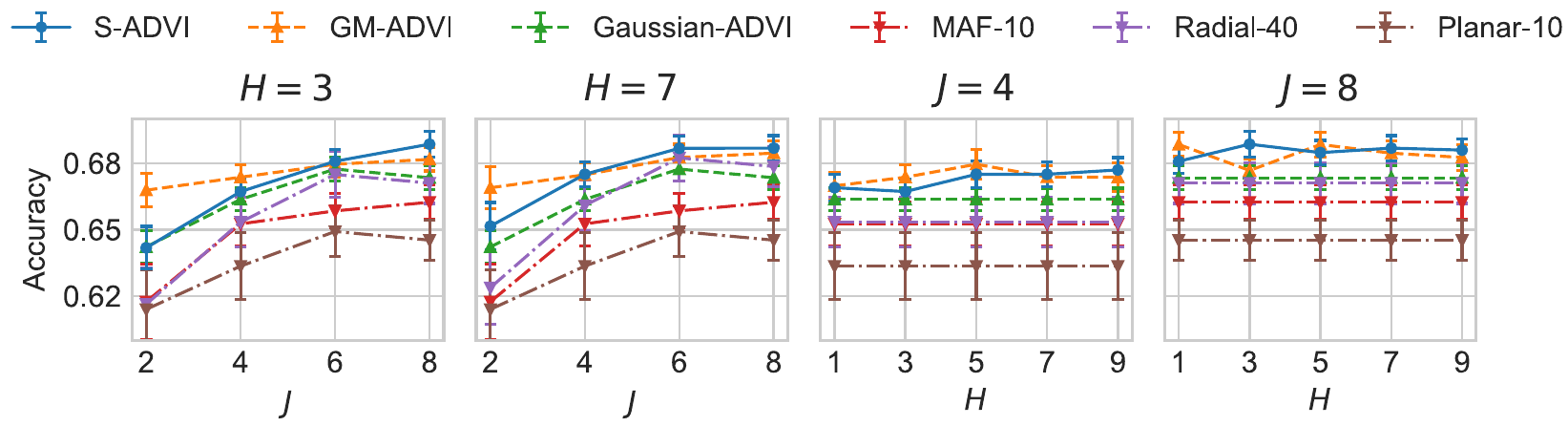}
    \vskip -0.1in
	\caption{\capitalisewords{Performance comparison for single column classification for the FMNIST dataset with a range of latent variables $J$ and interior knots $H$ (Error bars denote the standard error)}} \label{fig:performace}
\end{figure*}

{We consider the following five simulation cases to demonstrate the proposed method (S-ADVI) in approximating the posterior distribution of $z|x$:}
\begin{enumerate}[noitemsep,topsep=0pt]
    \item $z\!\sim\!\textrm{Gamma}(2,2),\, x|z\!\sim\! \textrm{Exponential}(z)$;
    \item $z\!\sim\!\textrm{Gamma}(2,2),\, x|z\!\sim\!\textrm{Poisson}(z)$;
    \item $z\!\sim\!\textrm{Beta}(7,3),\, x|z \!\sim\!\textrm{Bernoulli}(z)$;
    \item $z\!\sim\!\textrm{Beta}(2,2),\, x|z\!\sim\!\textrm{Binomial}(10,z)$;
    \item $z\!\sim\! 0.5 N(-0.5, 0.1) + 0.5N(0.5, 0.1)$,  $x|z\!\sim\! N(z, 1)$.
\end{enumerate}

\begin{table*}
    \centering
    \caption{\capitalisewords{Mean and Standard Deviation (in bracket) of Root Integrated Squared Error in Posterior Approximation}}\label{tab:my_label}
    \vspace{1ex}
    % \resizebox{0.75\linewidth}{!}
    {
    \addtolength{\tabcolsep}{-0.4em}
      \begin{tabular}{lccccc}
    \hline\hline
        Method & Case 1         & Case 2        & Case 3        & Case 4        & Case 5\\ 
        \hline
       Gaussian-ADVI    & 0.408 (0.274)          & 0.239 (0.044)         & 0.630 (0.239)        & 0.631 (0.386)         & 0.243 (0.014) \\ 
    GM-ADVI    & 0.353 (0.152)          & 0.211 (0.026)         & 0.403 (0.115)        & 0.395 (0.107)         & 0.137 (0.020) \\
     S-ADVI    & \textbf{0.086} (0.046) &\textbf{0.054} (0.020) &\textbf{0.211} (0.070) &\textbf{0.310} (0.080) &\textbf{0.097} (0.025) \\
       \hline
       \bottomrule
    \end{tabular}
    }
    \vspace{-2ex}
\end{table*}

We generate 1024 samples for all cases, training models in batches of 32 across 40 epochs over 20 runs. A two-layer multilayer perceptron (MLP) with 20 hidden units per layer is used to estimate unknown parameters. For S-ADVI, we set the interior knots ($H$) to 6 for Cases 1-4, and 9 for the final case due to its complex multimodal structure. Performance is evaluated using root integrated squared error (RISE), defined as $[\int {[q(z|x)-p(z|x)]}^2dz]^{1/2}$, comparing our method against Gaussian-ADVI (approximating $p(z|x)$ with a Gaussian distribution) and Gaussian Mixture ADVI (GM-ADVI, \cite{morningstar2021automatic}) based on stratified sampling. For likelihoods requiring bounded support of $z|x$, truncated distributions of $z|x$ are considered for Gaussian-ADVI and GM-ADVI.

The results presented in Table \ref{tab:my_label} underscore the advantage of S-ADVI over Gaussian-ADVI and GM-ADVI, and Table \ref{tab:complete_posterior_infer_res} in the supplementary materials shows additional comparison results with methods based on normalizing flows. Visual representations in Figure \ref{fig:toy_examples_1} and Figure \ref{fig:all_cases} (supplementary material) depict the approximated posteriors compared to the true posterior functions, showcasing S-ADVI's ability to approximate the true posterior distribution. Across Cases 1 to 5, S-ADVI consistently outperforms other methods by leveraging spline functions for posterior approximation.
Specifically, for Cases 1 through 3 and Case 5, where the true posterior distribution exhibits significant skewness or complex multimodal nature, Gaussian-ADVI effectively estimates the posterior locations but struggles to capture their shapes.
In Cases 1 through 3, characterized by strong skewness and bounded supports in the true posterior distribution, GM-ADVI exhibits subpar performance. While GM-ADVI captures the skewness of the posteriors, it fails to estimate the boundaries, showing a high estimation error when $z$ is close to the extreme points.
In contrast, our proposed methodology excels, particularly when dealing with distributions that are skewed and with bounded latent variable supports. Additionally, 
Figure \ref{fig:all_cases} shows that normalizing flows effectively capture the posterior distribution's shape. In complex scenarios like the multimodal distributions of Case 5, normalizing flows outperform the GM-ADVI and Gaussian-ADVI through invertible density transformations. However, normalizing flows generates unexpected irregularities, such as wiggles in Cases 1 and 2. One possible reason is that, in contrast to the complex invertible density transformations of normalizing flows, S-ADVI provides a more effective way to approximate posterior distributions.

\subsection{Real Data Applications}
\label{SEC:Single-Column}

\begin{figure}[!tb] % only for (N_{explore}, N_{evaluate}) = (30,15)
    \centering
    \includegraphics[width=0.4\textwidth]{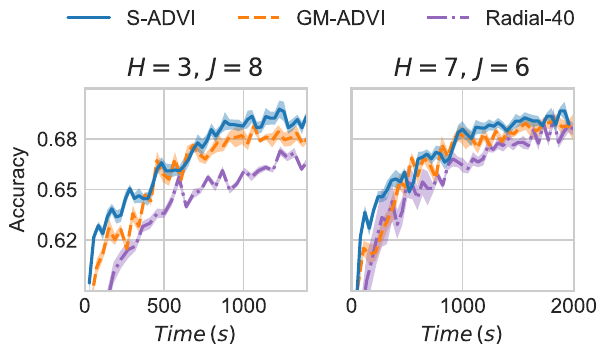}
    \vskip -0.1in
	\caption{\capitalisewords{Computational budget comparison for single column classification for the FMNIST dataset with a range of latent variables $J$ and interior knots} $H$ \capitalisewords{(Shadows represent the standard error)}} \label{fig:computational}
 \vspace{-2ex}
\end{figure}

\begin{figure*}[!tbp] % only for (N_{explore}, N_{evaluate}) = (30,15)
	\centering
	\subfloat{\includegraphics[width=0.92\textwidth]{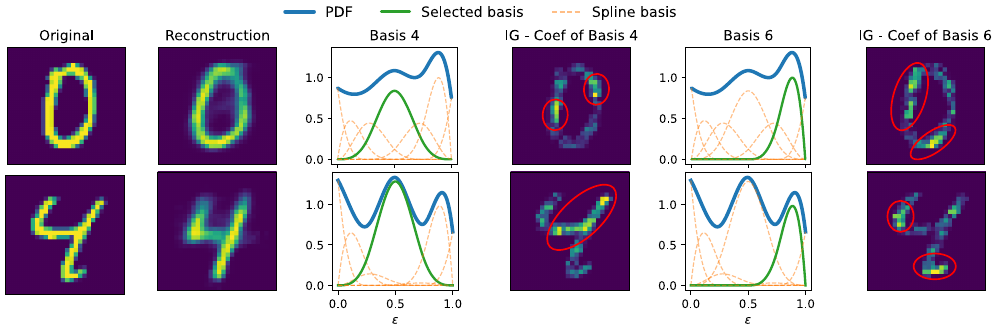}}
	\\ \vskip -0.1in
	\caption{\capitalisewords{Analysis of the approximated shape of posterior from S-VAE, where Brighter pixels correspond to higher absolute attributes, and Regions containing highly attributed pixels are highlighted in red circles}} \label{fig:latent_individual_ig}
\end{figure*}

\noindent \textbf{Single Column Classification.} We conduct classification tasks on the 
Fashion-MNIST (FMNIST) \citep{xiao2017online} datasets to evaluate the performance of S-ADVI under different parameters. The model used for classification is Variational Information Bottleneck (VIB) \citep{alemi2017deep}, which shares a similar structure as VAE with decoder replaced by a classifier. We use a simple multinomial logistic regression as the classifier and the isotropic Gaussian distribution prior to encouraging the encoder to capture the underlying information present in the images. To showcase the flexible spline approximation, we limit the input samples only to include the center column of training images. An illustration example of the input sample can be found in Figure \ref{fig:singlecolumn} in the supplementary material.

We assess S-ADVI's performance with $H = {1, 3, 5, 7, 9}$, $J = {2, 4, 6, 8}$ and $T = 10$ in IWAE, setting $v_0 = 0$ and $v_{H+1} = 1$ with equal spacing for $v_h$ where $h=1,\ldots, H$. For comparison, we consider VIB with GM-ADVI and Gaussian-ADVI, matching the number of Gaussian components in GM-ADVI to the spline bases in S-ADVI ($H+4$). Three normalizing flow-based methods, Planar and Radial (10 and 40 flows), and masked autoregressive flow with 10 flows (MAF-10) are also compared. Prediction scores are calculated by averaging model outputs from 100 samples drawn from $q_{\phi}(\bm z| \bm x)$. Experiments are repeated 10 times, and we reporting the mean and standard error of accuracy on the test set.

Figure \ref{fig:performace} shows the performance comparison on the FMNIST dataset for the top six methods across varying $J$ and $H$ for S-ADVI ($H+4$ basis for GM-ADVI).
Results indicate S-ADVI's performance generally improves with increased latent variables and interior knots, aligning with theoretical outcomes. S-ADVI benefits from additional interior knots when the number of latent variables is relatively small. With sufficient latent variables, extra interior knots is less impactful and can lead to overfitting. GM-ADVI's flexible Gaussian components provide an edge over S-ADVI with limited $J$ and $H$. However, adding interior knots or latent variables can help S-ADVI to match and outperform GM-ADVI, which may result from 
% the implicit regularization of spline approximation.
the natural regularization of the latent variables with bounded support of spline approximation.
Both S-ADVI and GM-ADVI outperform the normalizing flow methods, especially with limited latent variables.

In addition, we compare the computational budget of the top 3 performing methods: S-ADVI, GM-ADVI, and Radial-40. Results are shown in Figure \ref{fig:computational}. In general, S-ADVI converges faster than GM-ADVI and Radial-40, especially when the numbers of knots and latent variables are limited. One of the potential reasons could be the fewer parameters required by S-ADVI. When the number of latent variables increases, three methods achieve comparable convergence speeds. Annealing is important in the proposed method to ensure most of the combinations of coefficients are explored. Section \ref{app_sec:annealing_exp} of the supplementary file evaluates the influences of annealing functions on model performance. The performance of our proposed method is robust to the choice of annealing function.

\noindent\textbf{Imaging Reconstructions.} Our previous experiments suggest that the proposed S-ADVI methods can match and outperform other VI-based methods in terms of classification task, even when the data does not contain sufficient information. In this experiment, we further compare the S-ADVI and GM-ADVI methods in terms of reconstruction using MNIST \citep{lecun1998gradient}, FMNIST, and CIFAR-10 \citep{krizhevsky2009learning} datasets. Section \ref{app_sec:full_image_exp} in the supplementary material presents the implementation details and the numerical results. We find that when the number of latent variables is limited, GM-ADVI outperforms S-ADVI. When $J$ increases, the performance of both models improves, and S-ADVI outperforms GM-ADVI. The number of bases $H$ also influences model performance when $H$ increases from 1 to 3, but further increasing $H$ does not improve model performance much and can lead to overfitting. It is also interesting to note that the advantage of S-ADVI increases for more complicated datasets (e.g., MNIST vs CIFAR-10) with higher $J$ and $H$.

\noindent \textbf{Interpreting Distributions of Latent Variables.} We evaluate the relationship between spline approximations' shape and input features by training a variational autoencoder with S-ADVI (S-VAE) on the MNIST benchmark dataset to reconstruct input images. Implementation details and overall results are provided in Section \ref{SEC:shape-supp} (supplementary material). We assess the shape of posterior distribution pre location-scale transformation, focusing on the density functions of $\epsilon_j|\bm x$ defined in (\ref{EQU:S-ADVI}), with support $q_{\phi}(\epsilon_j|\bm x)$ in $[0,1]$ for all $j=1,\ldots,J$, so that the shapes are comparable across samples.

Using Integrated Gradients (IG) \citep{sundararajan2017axiomatic}, we explore the link between input samples and approximated $q_{\phi}(\epsilon_j | \bm x)$ shapes. For a sample $\bm x$ and its density function $\sum_{k=1}^K \gamma_{jk}(\bm x) b_k\left(\epsilon_j\right)$, IG attributes $\gamma_{jk}(\bm x)$ values to input sample features, here MNIST pixels. We measure feature relative importance to $\gamma_{jk}(\bm x)$ using IG's absolute attribute values. We set the baseline to a zero vector, representing a blank image.

Experiment results in Figure \ref{fig:latent_individual_ig} show original samples and S-VAE reconstructions (Columns 1 and 2). We selected bases 4 and 6 of $q_{\phi}(\epsilon_4|\bm x)$ with distinct modes for different digits (Figure \ref{fig:latent-pdf}). Columns 3 and 5 illustrate selected and other weighted spline bases, while Columns 4 and 6 display IG results, with brighter pixels indicating higher attribute values. We notice that bases with higher coefficients represent larger image regions and capture key image characteristics. For instance, IG reveals that in the third image, basis 4 corresponds to digit 4's upper right structure, and basis 6 to the top left part of digit 0.

\section{\uppercase{Discussion}}\label{sec:discussion}

In this paper, we introduce a nonparametric ADVI framework that uses spline approximations to approximate posterior distributions and achieves a balance between flexibility, parsimony, and interpretability. We establish S-ADVI’s posterior consistency in approximating complex distributions through the asymptotic properties of IWAE. Compared with classic ADVI methods, experiment results suggest S-ADVI's superior capacity to approximate distributions with bounded support and multimodality. 

Despite its strengths, the proposed S-ADVI can be further improved. First, our methodology uses the mean-field approximation for model simplicity and computational efficiency, but it does not account for the underlying dependencies among latent variables.  Second, we can further optimize the locations of interior knots, which are pre-specified in the current framework, based on dataset characteristics \citep{spiriti2013knot}. In addition, incorporating techniques such as fused lasso could help in subgroup analysis on spline coefficients \citep{tibshirani2005sparsity}. For the posterior inference problems in VI, latent variables can be divided into two categories: local latent variables for individual observations and global latent variables \citep{wang2019frequentist}.  This paper focuses primarily on the posterior inference of local latent variables, as discussed in the theoretical analysis and numerical evaluations.  Investigating the theoretical properties of global latent variables represents promising future research. Finally, our spline-based posterior approximation approach opens up possibilities for modeling spatial-temporal data within the ADVI framework, which aligns with the current research trends towards more accurate prior approximations in variational autoencoders \citep{pang2020learning}. 

\section*{Acknowledgements}

We would like to thank the reviewers for valuable feedback and suggestions.
\newpage
\part{Supplementary Materials for ``Nonparametric Automatic Differentiation Variational Inference with Spline Approximation''}
\counterwithin{figure}{section}
\renewcommand\thefigure{S.\arabic{figure}}   % <---------------
\renewcommand\thesection{S.\arabic{section}} 
\renewcommand\thetable{S.\arabic{table}}% <---------------
\section{\uppercase{Additional Lemmas and Proof Details}}
\label{SEC:spline}

\subsection{Proof of Theorem \ref{THE:Approximation}}
\label{SEC:THE:Approximation-Proof}
Lemma \ref{THE:ELBO} shows that the proposed S-ADVI allows us to quantify the Lower Bound of IWAE.

\begin{proof} [Proof of Lemma \ref{THE:ELBO}] We start with a one-dimensional latent variable with a finite support $\mathcal{T}$. The objective function is 
$\mathcal{L}_{\text{IWAE}}(\phi) = \mathbb{E}_{\left\{z_t \sim q_\phi(z| \bm x)\right\}_{t=1}^T}\left[\log \frac{1}{T} \sum_{t=1}^T \frac{p_{\theta}\left(\bm x|z_t\right) p\left(z_t\right)}{q_\phi\left(z_t| \bm x\right)}\right] =\int  \log \frac{1}{T} \sum_{t=1}^T \frac{p_{\theta}\left(\bm x|z_t\right) p\left(z_t\right)}{q_\phi\left(z_t| \bm x\right)} \cdot \prod_{t=1}^Tp(z_t|\bm x)  dz_t.$
According to Remark \ref{Remark1}, for any posterior $p(z_t|\bm x) \in \mathcal{H}^{(\varrho)}(\mathcal{T})$, there exists a spline function $s(z_t) = \sum_{k=1}^K \gamma_{k} b_k(z_t)$ such that $\sup_{z_t \in \mathcal{T}}|p(z_t|\bm x) - s(z_t)|\leq C H^{-(\varrho+1)}$. We denote the corresponding spline coefficients as $\phi^{\ast}$, and the optimal spline function as $\widehat{s}(z_t)$  whose spline coefficients satisfy $\widehat{\phi} = \argmax   \mathcal{L}_{\text{IWAE}}(\phi)$. Therefore, we have $ \mathcal{L}_{\text{IWAE}}(\widehat{\phi})  \geq \mathcal{L}_{\text{IWAE}}(\phi^\ast).$ We denote $\Delta(z_t) = p(z_t|\bm x) - s^{\ast}(z_t)$ and $\sup_{z_t \in \mathcal{T}}|\Delta(z_t)|\leq C H^{-(\varrho+1)}$. Then, we have
\begin{align}
\notag
    \mathcal{L}_{\text{IWAE}}& (\widehat{\phi})  \geq  \mathcal{L}_{\text{IWAE}}(\phi^{\ast}) = \int \left[ \log \frac{1}{T} \sum_{t=1}^T \frac{p_{\theta}\left(\bm x|z_t\right) p\left(z_t\right)}{p(z_t|\bm x) - \Delta(z_t)} \right] \left[ \prod_{t=1}^Tp(z_t|\bm x) - \Delta(z_t)\right] dz_t \\
    \notag
    & = \int \left\{ \log \left[p_{\theta}(\bm x) + \frac{1}{T} \sum_{t=1}^T \frac{\Delta(z_t) p_{\theta}\left(\bm x|z_t\right) p\left(z_t\right)}{\{p(z_t|\bm x) - \Delta(z_t)\} p(z_t|\bm x)} \right]\right\} \left[ \prod_{t=1}^Tp(z_t|\bm x) - \Delta(z_t)\right] dz_t \\
    \label{EQU:S-ADVI-1}
    & \geq \int \left\{ \log \left[p_{\theta}(\bm x) - C_1 H^{-(\varrho+1)} \right] \right\}\left[ \prod_{t=1}^Tp(z_t|\bm x) - C_2H^{-(\varrho+1)}\right] dz_t  \geq \log p_{\theta}(\bm x) - C_3 H^{-(\varrho+1)},
\end{align}
which yields the results in Lemma \ref{THE:ELBO}.

Next, we consider latent variables with infinite support. For a given $\epsilon$, consider a finite support $\mathcal{T}$ such that $\int_{\mathcal{T}}p\left(z|\bm x\right) dz \geq 1-\epsilon$. The spline-based posterior has a finite support on $\mathcal{T}$. According to the definition of IWAE and the fact that the IWAE has a tighter bound than ELBO, we have
\begin{align*}
    \mathcal{L}_{\text{IWAE}}&(\phi)  =  \mathbb{E}_{\left\{z_t \sim q_\phi(z| \bm x)\right\}_{t=1}^T} \left[\log \frac{1}{T} \sum_{t=1}^T \frac{p_{\theta}\left(\bm x|z_t\right) p\left(z_t\right)}{q_\phi\left(z_t| \bm x\right)}\right] \\
    & =  \int_{\mathcal{T}} \left[ \log \frac{1}{T} \sum_{t=1}^T \frac{p_{\theta}\left(\bm x|z_t\right) p\left(z_t\right)}{q_{\phi}(z_t| \bm x)} \right] \left[ \prod_{t=1}^T q_{\phi}(z_t| \bm x)\right] dz_1 dz_2 \ldots dz_T \\
    & \quad + \int_{\mathbb{R}/\mathcal{T}} \left[ \log \frac{1}{T} \sum_{t=1}^T \frac{p_{\theta}\left(\bm x|z_t\right) p\left(z_t\right)}{q_{\phi}(z_t| \bm x)} \right] \left[ \prod_{t=1}^T q_{\phi}(z_t| \bm x)\right] dz_1 dz_2 \ldots dz_T\\
    & \geq \int_{\mathcal{T}} \left[ \log \frac{p_{\theta}\left(\bm x|z\right) p\left(z\right)}{q_{\phi}(z| \bm x)} \right] q_{\phi}(z| \bm x)  dz + \int_{\mathbb{R}/\mathcal{T}} \left[ \log \frac{p_{\theta}\left(\bm x|z\right) p\left(z\right)}{q_{\phi}(z| \bm x)} \right]  q_{\phi}(z| \bm x) dz = \int_{\mathcal{T}} \left[ \log \frac{p_{\theta}\left(\bm x|z\right) p\left(z\right)}{q_{\phi}(z| \bm x)} \right] q_{\phi}(z| \bm x)  dz  \\
    & = \log p_{\theta}(\bm x) + \int_{\mathcal{T}} \left[ \log \frac{p\left(z|\bm x\right)}{q_{\phi}(z| \bm x)} \right]  q_{\phi}(z| \bm x) dz \geq \log p_{\theta}(\bm x) + \int_{\mathcal{T}} \left[ \log \frac{p^{\textrm{T}}\left(z|\bm x\right)}{q_{\phi}(z| \bm x)} + \log (1-\epsilon) \right]  q_{\phi}(z| \bm x) dz,
\end{align*}
where $p^{\textrm{T}}\left(z|\bm x\right) = (\int_{\mathcal{T}}p\left(z|\bm x\right) dz)^{-1}p\left(z|\bm x\right) $ is the density function for random variable $z|\bm x$ truncated on the interval $\mathcal{T}$. Notice that when $\mathcal{T}$ is properly chosen and we can apply the conclusion from (\ref{EQU:S-ADVI-1}) 
\[
\log p_{\theta}(\bm x) + \int_{\mathcal{T}} \left[ \log \frac{p^{\textrm{T}}\left(z|\bm x\right)}{q_{\phi}(z| \bm x)} + \log (1-\epsilon) \right]  q_{\phi}(z| \bm x) dz \geq \log p_{\theta}(\bm x) - C_3 H^{-(\varrho+1)} - C_3\epsilon,
\]
which yields the results in Lemma \ref{THE:ELBO}.

It is straightforward to extend the above results to the multivariate case. If Assumption (A2) holds, then we have 
\begin{align*}
  \int_{\mathcal{T}_1 \cdots \mathcal{T}_J} \left[ \log \frac{p \left(\bm z|\bm x\right)}{q_{\phi}(\bm z| \bm x)}\right]  q_{\phi}(\bm z| \bm x) d\bm z & = \int_{\mathcal{T}_1 \cdots \mathcal{T}_J} \sum_{j=1}^J \left[ \log \frac{p \left(z_j|\bm x\right)}{q_{\phi}(z_j| \bm x)}\right] \left[\prod_{j=1}^J q_{\phi}(z_j| \bm x) \right] dz_1 \cdots dz_J \\
  &  = \sum_{j=1}^J  \int_{\mathcal{T}_j} \left[ \log \frac{p \left(z_j|\bm x\right)}{q_{\phi}(z_j| \bm x)}\right] q_{\phi}(z_j| \bm x) dz_j.
\end{align*}
Similar to the results in (\ref{EQU:S-ADVI-1}), we can obtain the lower bound of IWAE based on the S-ADVI is $\log p_{\theta}(\bm x) - C J H^{-(\varrho+1)} - J \epsilon$.
\end{proof}

According to the results in Lemma \ref{THE:ELBO}, we can further quantify the variational approximation error with respect to the class defined in (\ref{EQU:S-ADVI}). 
\begin{proof}[Proof of Theorem \ref{THE:Approximation}]
According to Remark \ref{Remark1} and similar to the Proof of Lemma \ref{THE:ELBO}, there exists  $q_{\phi}(\bm z| \bm x)$ such that $\sup_{\bm z}|p(\bm z|\bm x) - q_{\phi}(\bm z| \bm x)|\leq C JH^{-(\varrho+1)}$. When $H$ goes to infinity, $JH^{-(\varrho+1)}$ goes to zero. One can also infer that $|p(\bm z|\bm x) - q_{\widehat{\phi}}(\bm z| \bm x)|\leq C JH^{-(\varrho+1)}$ almost everywhere on $\mathcal{T}_1 \times \cdots \times \mathcal{T}_J$. Then, the following property holds that is, 
\begin{align*}
   & \mathcal{L}_{\text {IWAE }}(\widehat{\phi}) = \mathbb{E}_{\left\{q_{\widehat{\phi}}( \bm z_t| \bm x)\right\}_{t=1}^T}\left[\log \frac{1}{T} \sum_{t=1}^T \frac{p\left(\bm z_t|\bm x\right)}{q_{\widehat{\phi}}\left(\bm z_t| \bm x\right)}\right] + \log p_{\theta}(\bm x)  \\
   & = \mathbb{E}_{\left\{q_{\widehat{\phi}}( \bm z_t| \bm x)\right\}_{t=1}^T}\left\{\log  \left[1 + \frac{1}{T} \sum_{t=1}^T \frac{p\left(\bm z_t|\bm x\right) - q_{\widehat{\phi}}\left(\bm z_t| \bm x\right)}{q_{\widehat{\phi}}\left(\bm z_t| \bm x\right)}\right]\right\} + \log p_{\theta}(\bm x) \\
   & = \mathbb{E}_{\left\{q_{\widehat{\phi}}( \bm z_t| \bm x)\right\}_{t=1}^T}\left\{\frac{1}{T} \sum_{t=1}^T \frac{p\left(\bm z_t|\bm x\right) - q_{\widehat{\phi}}\left(\bm z_t| \bm x\right)}{q_{\widehat{\phi}}\left(\bm z_t| \bm x\right)} + o\left[JH^{-(\varrho+1)}\right]\right\} + \log p_{\theta}(\bm x)\\
   & = \mathbb{E}_{q_{\widehat{\phi}}( \bm z| \bm x)}\left\{\frac{p\left(\bm z|\bm x\right) - q_{\widehat{\phi}}\left(\bm z| \bm x\right)}{q_{\widehat{\phi}}\left(\bm z| \bm x\right)} + o\left[JH^{-(\varrho+1)}\right] \right\} + \log p_{\theta}(\bm x) \\
   & = \mathbb{E}_{q_{\widehat{\phi}}( \bm z| \bm x)}\left\{\log \frac{p\left(\bm z|\bm x\right)}{q_{\widehat{\phi}}\left(\bm z| \bm x\right)}+ o\left[JH^{-(\varrho+1)}\right]\right\} + \log p_{\theta}(\bm x) \\
   & = -D_{\textrm{KL}}\left[q_{\widehat{\phi}}\left( \bm z| \bm x\right)||p\left(\bm z|\bm x\right)\right] + \log p_{\theta}(\bm x) + o\left[JH^{-(\varrho+1)}\right]
\end{align*}
The result in Lemma \ref{THE:ELBO} implies the KL divergence of the spline density approximation from the true posterior is in the order of $JH^{-(\varrho+1)} + J \epsilon$.  
\end{proof}

\subsection{Proof of Theorem \ref{THE:Approximation-Regression}}
\label{SEC:THE:Approximation-Regression-Proof}

\begin{lemma}[Lemma 6.3 \citep{csiszar2006context}]
\label{LEM:Kl-L2}
    If $p$ and $q$ are probability densities both supported on a bounded interval $\mathcal{T}$, then we have the KL divergence between probability densities satisfies that $D_{\mathrm{KL}}(p||q) \leq \frac{1}{\inf _{x \in \mathcal{T}} q(x)}\|p-q\|_2^2.$
\end{lemma}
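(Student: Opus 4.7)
The plan is to bound the KL divergence by the chi-squared divergence and then exploit the uniform lower bound on $q$ to pass from a weighted to an unweighted $L_2$ integral. I would start from the standard pointwise inequality $\log y \leq y-1$ for $y>0$, which, applied with $y = p(x)/q(x)$, yields $\log\{p(x)/q(x)\} \leq p(x)/q(x) - 1$. Multiplying both sides by $p(x)$ and rewriting the right-hand side gives the cleaner bound
\begin{equation*}
p(x)\log\frac{p(x)}{q(x)} \leq \frac{[p(x)-q(x)]^2}{q(x)} + \{p(x) - q(x)\},
\end{equation*}
which is the pointwise precursor of the chi-squared domination of KL.

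Next I would integrate over $\mathcal{T}$. Since $p$ and $q$ are both probability densities supported on $\mathcal{T}$, the linear term $\int_{\mathcal{T}}\{p(x)-q(x)\}\,dx$ vanishes, leaving
\begin{equation*}
D_{\mathrm{KL}}(p\,\|\,q) \;\leq\; \chi^2(p\,\|\,q) \;=\; \int_{\mathcal{T}} \frac{[p(x)-q(x)]^2}{q(x)}\,dx.
\end{equation*}
This is the standard KL-by-chi-squared inequality, and it is the only analytic step in the proof.

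Finally, I would use the assumption that $\inf_{x \in \mathcal{T}} q(x) > 0$ to factor out the denominator: for every $x \in \mathcal{T}$ we have $1/q(x) \leq 1/\inf_{x\in\mathcal{T}} q(x)$, so
\begin{equation*}
\int_{\mathcal{T}} \frac{[p(x)-q(x)]^2}{q(x)}\,dx \;\leq\; \frac{1}{\inf_{x\in\mathcal{T}} q(x)} \int_{\mathcal{T}} [p(x)-q(x)]^2\,dx \;=\; \frac{\|p-q\|_2^2}{\inf_{x\in\mathcal{T}} q(x)},
\end{equation*}
which chains with the previous display to give the claim.

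There is no genuine obstacle here; the only minor subtlety is verifying that the pointwise bound integrates correctly (so one should note that $p \log(p/q)$ is defined, with the convention $0\log 0 = 0$, wherever either density vanishes, and that the positivity $\inf_{\mathcal T} q > 0$ makes the chi-squared integral finite and the bound nontrivial). If $\inf_{\mathcal T} q = 0$ the stated inequality is vacuous, so no edge case needs to be handled separately.
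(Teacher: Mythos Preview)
Your proof is correct and follows exactly the same route as the paper: apply $\log y \le y-1$ with $y=p/q$, multiply by $p$ and integrate to obtain the chi-squared bound $D_{\mathrm{KL}}(p\|q)\le\int (p-q)^2/q$, then pull out $1/\inf_{\mathcal T} q$. The only cosmetic difference is that you split off the linear term $(p-q)$ pointwise before integrating, whereas the paper writes $\int p(p/q-1)=\int(p-q)^2/q$ directly; these are the same algebraic identity.
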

\begin{proof} Notice that
    $$
\begin{aligned}
D_{\mathrm{KL}}(p||q) & =\int_{\mathcal{T}} p(x) \log \frac{p(x)}{q(x)} \mathrm{d} x \leq \int_{\mathcal{T}}p(x)\left\{\frac{p(x)}{q(x)}-1\right\} \mathrm{d} x  =\int_{\mathcal{T}}\frac{\{p(x)-q(x)\}^2}{q(x)} \mathrm{d} x
\end{aligned}
$$
from which the claim follows.
\end{proof}

Here we provide the proof sketch of Theorem \ref{THE:Approximation-Regression}. Our proof is based on the assumption when the observed data points $\boldsymbol x$ and $\boldsymbol x'$ are close enough, the corresponding posteriors $p(\bm z|\boldsymbol x)$ and $p(\bm z|\boldsymbol x')$ are similar to each other. For any given posterior $p(\bm z|\boldsymbol x)$ under satisfying Assumption (A2), we can identify the density function based on spline approximation $\prod_{j=1}^J \sum_{k=1}^K \gamma^\ast_{jk}(\boldsymbol x) b_{k, \mathcal{T}^\ast_j}(z_j)$ with $\mathcal{T}^\ast_j = [\mu^\ast_j(\boldsymbol x), \mu^\ast_j(\boldsymbol x) + \sigma^\ast_j(\boldsymbol x)]$ close to $p(\bm z|\boldsymbol x)$ with differences bounded by $JH^{-(\varrho+1)}$. Under some mild assumptions, $\gamma^\ast_{jk}(\boldsymbol x)$, $k=1,\ldots, K$, $\mu^\ast_j(\boldsymbol x)$, and $\sigma^\ast_j(\boldsymbol x)$ can be well approximated by nonparametric regression, such as the deep neural network. Combining the results in Theorem \ref{THE:Approximation}, we can further obtain the KL divergence between the proposed S-ADVI estimator and the true posterior.

\begin{proof}[Proof of Theorem \ref{THE:Approximation-Regression}]

    In the following, we denote that the unknown parameters as $\phi = \{\mu_j, \sigma_j, \gamma_{jk}, j = 1, \ldots, J, k = 1, \ldots, K\}$ and $\widehat{\phi}(\bm x) = \argmax_{\phi}  \sum_{i=1}^n \mathcal{L}_{\text{ELBO}}\{\phi(\bm x_i)\}$. We notice that $ \sum_{i=1}^n \mathcal{L}_{\text{ELBO}}\{\widehat{\phi}(\bm x_i)\} = \sum_{i=1}^n \log p_{\theta}(\bm x_i)-\sum_{i=1}^n D_{KL}\left\{q_{\widehat{\phi}(\bm x_i)}(\bm z)~||~p(\bm z | \bm x_i)\right\}$, we can infer to the average of the KL divergence between $q_{\widehat{\phi}(\bm x_i)}(\bm z)$ and $p(\bm z | \bm x_i)$ equaling to $n^{-1}\sum_{i=1}^n \log p_{\theta}(\bm x_i) - n^{-1}\sum_{i=1}^n \mathcal{L}_{\text{ELBO}}\{\widehat{\phi}(\bm x_i)\}$. In addition, for a given $\xi$, consider a finite support $\mathcal{T}_j^{\ast}=[\mu_j^\ast(\bm x), \mu_j^\ast(\bm x) + \sigma_j^\ast(\bm x)]$ such that $\int_{\mathcal{T}_j^{\ast}}p\left(z_j|\bm x\right) dz_j \geq 1-\xi$. For a specific posterior $p(\bm z|\bm x)$, the optimal parameters are $\phi^\ast(\bm x) = \left\{\mu^\ast_j(\bm x), \sigma^\ast_j(\bm x), \gamma^\ast_{jk}(\bm x), j = 1, \ldots, J, k=1, \ldots, K\right\}$, where $\gamma^\ast_{jk}(\bm x)$'s are spline coefficients satisfying that  $\sup_{z_j \in \mathcal{T}_j^\ast}|p(z_j|\bm x) - s^\ast(z_j;\bm x)|\leq C H^{-(\varrho+1)}$, where $s^\ast(z_j;\bm x) = \sum_{k=1}^K \gamma^\ast_{jk}(\bm x) b_k(z_j)$.

    Next, we denote the estimators of the optimal parameters generated from nonparametric regression, such as the deep neural network, as $\widetilde{\phi}(\bm x) = \left\{\widetilde{\mu}_j(\bm x), \widetilde{\sigma}_j(\bm x), \widetilde{\gamma}_{jk}(\bm x), j = 1, \ldots, J, k=1, \ldots, K\right\}$. When the observed datapoints $\boldsymbol x$ and $\boldsymbol x'$ are close enough, the corresponding posteriors $p(\bm z|\boldsymbol x)$ and $p(\bm z|\boldsymbol x')$ are close, so do the corresponding optimal parameters $\phi^\ast(\bm x)$ and $\phi^\ast(\bm x')$.  Then, we have 
    \begin{align*}
    \frac{1}{n}& \sum_{i=1}^n D_{\mathrm{KL}}\left\{q_{\widehat{\phi}(\bm x_i)}(\bm z)~||~p(\bm z | \bm x_i)\right\} \\
      & = \frac{1}{n}\sum_{i=1}^n \log p_{\theta}(\bm x_i) - \frac{1}{n}\sum_{i=1}^n \mathcal{L}_{\text{ELBO}}\{\widehat{\phi}(\bm x_i)\}  \leq \frac{1}{n}\sum_{i=1}^n \log p_{\theta}(\bm x_i) - \frac{1}{n}\sum_{i=1}^n \mathcal{L}_{\text{ELBO}}\{\widetilde{\phi}(\bm x_i)\}\\
      & = \frac{1}{n} \sum_{i=1}^n D_{\mathrm{KL}}\left\{q_{\widetilde{\phi}(\bm x_i)}(\bm z)~||~p(\bm z | \bm x_i)\right\}. 
    \end{align*}

    According to Lemma \ref{LEM:Kl-L2}, we have 
    \begin{align*}
    \frac{1}{n}&\sum_{i=1}^nD_{\mathrm{KL}}  \left\{q_{\widetilde{\phi}(\bm x_i)}(\bm z)~||~p(\bm z | \bm x_i)\right\} \\
    & = \frac{1}{n}\sum_{i=1}^n \int  q_{\widetilde{\phi}(\bm x_i)}(\bm z) \log \frac{q_{\widetilde{\phi}(\bm x_i)}(\bm z)}{p(\bm z | \bm x_i)} \mathrm{d} \bm z = \sum_{i=1}^n \sum_{j=1}^J \int  q_{\widetilde{\phi}(\bm x_i)}(z_j) \log \frac{q_{\widetilde{\phi}(\bm x_i)}(z_j)}{p(z_j | \bm x_i)} \mathrm{d} z_j \\
    & \leq \frac{1}{n}\sum_{i=1}^n \sum_{j=1}^J \frac{1}{\inf_{z_j \in \widetilde{\mathcal{T}}_j} p(z_j|\bm x_i)} \|p(z_j| \bm x_i) - q_{\widetilde{\phi}(\bm x_i)}(z_j)\|^2  \\
    & \leq \frac{1}{n}\sum_{i=1}^n \sum_{j=1}^J \frac{2}{\inf_{z_j \in \widetilde{\mathcal{T}}_j} p(z_j |\bm x_i)} \left\{\|p(z_j| \bm x_i) - q_{\phi^{\ast} (\bm x_i)}(z_j)\|^2 + \|q_{\widetilde{\phi}(\bm x_i)}(z_j) - q_{\phi^{\ast}(\bm x_i)}(z_j)\|^2\right\}\\
    & \leq C_1 (J H^{-2(\varrho +1)} + J \epsilon^2) + C_2 J H^2  \Delta^2,
    \end{align*}
    where $\widetilde{\mathcal{T}}_j$ is the support of density function $q_{\widetilde{\phi}(\bm x_i)}(z_j)$ and $\widetilde{\mathcal{T}}_j = [\widetilde{\mu}_j(\bm x), \widetilde{\mu}_j(\bm x) + \widetilde{\sigma}_j(\bm x)]$. According to the properties of spline basis functions, we have $n^{-1}\sum_{i=1}^n\|q_{\widetilde{\phi}(\bm x_i)}(z_j) - q_{\phi^{\ast}(\bm x_i)}(z_j)\|^2 \leq C H \sum_{j=1}^J \sum_{k=1}^K \{\gamma^\ast_{jk}(\bm x_i) - \widetilde{\gamma}_{jk}(\bm x_i)\}^2 = O(H^2  \Delta^2)$.

\end{proof}

\section{\uppercase{Implementation Details}}

\setcounter{equation}{0}

\subsection{Spline Roughness Penalty}
\label{SEC:Implementation}

We introduce the details of constructing penalty $\mathbf{P}$ matrix. Following the definition of penalty matrix $\mathbf{P}$ in the Section \ref{SEC:modelestimation}, for any spline polynomials $s(t) = \sum_{k=1}^K \gamma_kb_k(t)$, we can further implement as
\begin{align}
\mathcal{E}(s)
&= \int_{\mathcal{T}} \left\{\sum_{k=1}^K \gamma_k b_k''(t)\right\}^2dt = \sum_{k=1}^K \sum_{k'=1}^K \gamma_{k} \gamma_{k'}  \int_{\mathcal{T}} b_k''(t)b_{k'}''(t)d t = \bs{\gamma}^{\top} \mathbf{P} \bs{\gamma}, \label{EQN:EsTP}
\end{align}
where $\mathbf{P}$  is a $K \times K$ matrix with entries $\int_{\mathcal{T}} b_k''(t)b_{k'}''(t)d t$. Denote that $\bm b''(t)$ as the vector of the second-order derivatives of the spline basis $\bm b(t)$ and $\bm b''(t) = \mathbf{D}^{(2)} \bm b(t) = \mathbf{M}_1 \mathbf{M}_2 \bm b_{\varrho-2}(t)$ where $\bm b_{\varrho-2}(t)$ is the vector of spline basis with degree $\varrho -2$ and 
$$
\mathbf{M}_\ell=(\varrho+1-\ell)\left(\begin{array}{ccccc}
\frac{-1}{\upsilon_1-\upsilon_{-\varrho+\ell}} & 0 & 0 & \cdots 0 & 0 \\
\frac{1}{\upsilon_1-\upsilon_{-\varrho+\ell}} & \frac{-1}{\upsilon_2-\upsilon_{1-\varrho+\ell}} & 0 & \cdots 0 & 0 \\
0 & \frac{1}{\upsilon_2-\upsilon_{1-\varrho+\ell}} & \frac{-1}{\upsilon_3-\upsilon_{2-\varrho+\ell}} & \cdots 0 & 0 \\
\vdots & \vdots & \vdots & \ddots & \vdots \\
0 & 0 & 0 & \cdots & \frac{1}{\upsilon_{H+\varrho +1- \ell}-\upsilon_{H+\varrho +1}}
\end{array}\right) \text {, for } \ell = 1, 2.
$$

\subsection{Details of Reparameterization Trick} \label{apdx:reparam_trick}

\begin{lemma}
    \label{Prop:density}
    If $\bm z = g(\bm \epsilon)$, $\bm g$ is a monotone function, then the density function of $\bm z$ is 
    $\bm f_{\bm z}(\bm z) = \bm f_{\bm \epsilon} \{\bm g^{-1}(\bm z)\} \left|\frac{\partial \bm z }{\partial \bm \epsilon} \right|,$
    where $\left|\frac{\partial \bm z }{\partial \bm \epsilon} \right|$ is the determinate of the Jacob matrix $\frac{\partial \bm z }{\partial \bm \epsilon}$. 
\end{lemma}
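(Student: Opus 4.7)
The plan is to establish the lemma via the classical change-of-variables formula for Lebesgue integrals. Because $\bm g$ is monotone and differentiable, it is a bijection between its domain and its image with a differentiable inverse $\bm g^{-1}$, which lets me transform freely between the $\bm \epsilon$-coordinates and the $\bm z$-coordinates. The overall strategy is to compute $\Pr(\bm z \in A)$ for an arbitrary Borel set $A$ in the range of $\bm z$ in two equivalent ways and identify the density of $\bm z$ by uniqueness.

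For the execution, I would proceed in three steps. First, use the identity $\{\bm z \in A\} = \{\bm \epsilon \in \bm g^{-1}(A)\}$ implied by $\bm z = \bm g(\bm \epsilon)$ to write
\[
\Pr(\bm z \in A) = \int_{\bm g^{-1}(A)} \bm f_{\bm \epsilon}(\bm \epsilon) \, d\bm \epsilon.
\]
Second, perform the substitution $\bm \epsilon = \bm g^{-1}(\bm z)$; the multivariate change-of-variables theorem introduces the Jacobian factor $\left|\det \partial \bm g^{-1}(\bm z)/\partial \bm z\right|$, yielding
\[
\Pr(\bm z \in A) = \int_A \bm f_{\bm \epsilon}\{\bm g^{-1}(\bm z)\} \left|\det \tfrac{\partial \bm g^{-1}(\bm z)}{\partial \bm z}\right| d\bm z.
\]
Third, because $A$ is arbitrary, uniqueness of the Radon--Nikodym density gives $\bm f_{\bm z}(\bm z) = \bm f_{\bm \epsilon}\{\bm g^{-1}(\bm z)\} \left|\det \partial \bm g^{-1}/\partial \bm z\right|$, and the inverse function theorem rewrites this Jacobian as the reciprocal of $\det \partial \bm z/\partial \bm \epsilon$ evaluated at $\bm \epsilon = \bm g^{-1}(\bm z)$. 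This matches the form stated in the lemma under the convention that makes $|\partial \bm z/\partial \bm \epsilon|$ recover the one-dimensional identity $f_z(z) = f_\epsilon\{(z-\mu)/\sigma\}/\sigma$ used for the affine reparameterization $z = \mu + \sigma \epsilon$ in display~(\ref{EQU:backpropagation}).

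The proof is essentially mechanical, and the main item to check is the regularity hypothesis on $\bm g$: coordinatewise monotonicity alone is not enough in the multivariate setting to invoke the change-of-variables theorem, so I would implicitly assume that $\bm g$ is a $C^1$ diffeomorphism with nowhere-vanishing Jacobian determinant. This is satisfied trivially by the affine reparameterizations $\bm z = \bm \mu + \bm \sigma \bm \epsilon$ that the paper actually uses, so no substantive difficulty arises beyond pinning down the regularity convention.
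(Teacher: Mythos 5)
Your proof is correct, but there is nothing in the paper to compare it against: the paper states this lemma without any proof, treating it as the standard change-of-variables formula for densities and immediately applying it to the affine reparameterization $z_j = \mu_j(\bm x) + \sigma_j(\bm x)\epsilon_j$. Your three-step argument --- equality of the events $\{\bm z \in A\}$ and $\{\bm \epsilon \in \bm g^{-1}(A)\}$, the multivariate change-of-variables theorem, and uniqueness of the Radon--Nikodym derivative --- is the standard and complete way to establish the result, so it supplies a proof the paper omits. Two of your side remarks deserve emphasis. First, you correctly caught that the lemma as printed has the Jacobian factor inverted: the density is $\bm f_{\bm \epsilon}\{\bm g^{-1}(\bm z)\}\left|\det \partial \bm g^{-1}(\bm z)/\partial \bm z\right|$, i.e.\ the \emph{reciprocal} of $\left|\det \partial \bm z/\partial \bm \epsilon\right|$, and this reciprocal version is what the paper actually uses in (\ref{EQU:density}), where the prefactor is $1/\sigma_j(\bm x)$ rather than $\sigma_j(\bm x)$. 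Second, you are right that monotonicity alone is not an adequate hypothesis in the multivariate setting; the $C^1$-diffeomorphism assumption with nonvanishing Jacobian that you make explicit is what the change-of-variables theorem requires, and it holds trivially for the diagonal affine maps to which the paper applies the lemma.
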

In the following, we derive that 
\[
\mathcal{L}_{\text{IWAE}}(\phi) =  \mathbb{E}_{\left\{\bm \epsilon_t\right\}_{t=1}^T}\left[\log \frac{1}{T} \sum_{t=1}^T \frac{p_{\theta}\left\{\bm x,\bm \mu(\bm x) + \bm \sigma (\bm x) \cdot \bm \epsilon_t \right\}}{\prod_{j=1}^J\left\{\sum_{k=1}^K \gamma_{jk}(\bm x) b_k\left(\epsilon_{jt}\right) \right\}}\right]  + \sum_{j=1}^J\log \sigma_j(\bm x).
\]
Note that $\bm z_{j} = \mu_j(\bm x) + \sigma_j (\bm x) \epsilon_{j}$. Applying the conclusion in Lemma \ref{Prop:density}, the function $q_{\phi}(z_{j}|\bm x)$ is 
\begin{equation}
\label{EQU:density}
    \frac{1}{\sigma_{j}(\bm x)}\sum_{k=1}^K \gamma_{jk}(\bm x) b_k\left\{\frac{z_{j}-\mu_j(\bm x)}{\sigma_j(\bm x)}\right\}.
\end{equation} 
Plugging in (\ref{EQU:density}) to $\mathcal{L}_{\text{IWAE}}(\phi)$, we have
\begin{align*}
   & \mathcal{L}_{\text{IWAE}}(\phi) = \mathbb{E}_{\left\{q_\phi( \bm z_t| \bm x)\right\}_{t=1}^T}\left[\log \frac{1}{T} \sum_{t=1}^T \frac{p_{\theta}\left(\bm x| \bm z_t\right) p\left(\bm z_t\right)}{q_\phi\left(\bm z_t| \bm x\right)}\right]\\
    &~= \int \left[\log \frac{1}{T} \sum_{t=1}^T \frac{p_{\theta}\left(\bm x, \bm z_t\right)}{q_{\phi}(\bm z_t| \bm x)} \right] \left[ \prod_{t=1}^T q_{\phi}(\bm z_t| \bm x) \right] d\bm z_t  \\
    &~= \int \left[\log \frac{1}{T} \sum_{t=1}^T \frac{p_{\theta}\left(\bm x, \bm z_t\right)}{q_{\phi}(\bm z_t| \bm x)} \right]  \left\{\prod_{t=1}^T  \prod_{j=1}^J \frac{1}{\sigma_{j}(\bm x)}(\bm x)\sum_{k=1}^K \gamma_{jk}(\bm x) b_k\left[\frac{z_{j}-\mu_j(\bm x)}{\sigma_j(\bm x)}\right] \right\} d\bm z_t \\
    &~= \int \left[\log \frac{1}{T} \sum_{t=1}^T \frac{p_{\theta}\left(\bm x, \bm \mu(\bm x) \!+\! \bm \sigma (\bm x) \cdot \bm \epsilon_t \right)}{\prod_{j=1}^{J}\frac{1}{\sigma_{j}(\bm x)}\sum_{k=1}^K \gamma_{jk}(\bm x) b_k(\epsilon_{jt})} \right]\left\{ \prod_{t=1}^T  \prod_{j=1}^J \left[\sum_{k=1}^K \gamma_{jk}(\bm x) b_k(\epsilon_{jt}) \right] \right\} d\bm \epsilon_t \\
    &~= \int \left[\log \frac{1}{T} \sum_{t=1}^T \frac{p_{\theta}\left(\bm x, \bm \mu(\bm x) + \bm \sigma (\bm x) \cdot \bm \epsilon_t \right)}{\prod_{j=1}^{J}\sum_{k=1}^K \gamma_{jk}(\bm x) b_k(\epsilon_{jt})} \right] \left\{ \prod_{t=1}^T  \prod_{j=1}^J \left[\sum_{k=1}^K \gamma_{jk}(\bm x) b_k(\epsilon_{jt})\right] \right\} d\bm \epsilon_t \!\!+ \!\!\sum_{j=1}^J \log \sigma_j(\bm x) \\
    &~=\mathbb{E}_{\left\{\bm \epsilon_t\right\}_{t=1}^T}\left[\log \frac{1}{T} \sum_{t=1}^T \frac{p_{\theta}\left\{\bm x,\bm \mu(\bm x) + \bm \sigma (\bm x) \cdot \bm \epsilon_t \right\}}{\prod_{j=1}^J\left\{\sum_{k=1}^K \gamma_{jt}(\bm x) b_k\left(\epsilon_{jt}\right) \right\}}\right] + \sum_{j=1}^J \log \sigma_j(\bm x).
\end{align*}

\subsection{Posterior Collapse}
\label{SEC:Collapse}

In generative models, posterior collapse means that the posterior of the latent variables equals their prior, that is, $p(\bm z| \bm x)=p(\bm z)$ \citep{Wang:Blei:Cunningham:2021}. 
ADVI with Gaussian mixture approximation (GM-ADVI) is one approach to generate flexible posterior approximation \citep{morningstar2021automatic}. Nonetheless, it may suffer from posterior collapse. An example of the GM-ADVI collapse is given as follows. The latent variable $\bm z$ is generated from a mixture distribution: 
$p\left(u\right)=\text{Categorical}(1 / M),\, p\left(\bm z|u\right)= N \left(\bm \mu_{u}, \mathbf{\Sigma}_{u}\right)$, where $u$ takes values from 1 to $M$, $\bm \mu_m$ 's are $d$-dimensional, and $\mathbf{\Sigma}_m$ are $d \times d$-dimensional covariance matrix. The observed data $\bm x$ follows  $p\left(\bm x|\bm z; f, \sigma\right) = N \left(f\left(\bm z\right), \sigma^2\mathbf{I}\right)$. Then, the marginal distribution of $\bm z$ is $1/M \sum_{m=1}^M  N \left(\bm \mu_{m}, \mathbf{\Sigma}_{m}\right)$, which can be well approximated via GM-ADVI. We are interested in estimating the latent class $u$ and the conditional distribution of $\bm z| u$ for a given data point $\bm x$. In this case, if we have $\bm z| u = 1/M \sum_{m=1}^M  N \left(\bm \mu_{m}, \mathbf{\Sigma}_{m}\right)$ for all the classes, we can still fully capture the mixture distribution of the data. However, all the $M$ mixture components are the same, and thus the latent variable is non-identifiable. In contrast to the above GM-ADVI example, the proposed S-ADVI is based on a set of pre-specified spline density functions and is constrained not to capture the marginal distribution of $\bm z$, preventing the mixture components from shrinking to the prior distribution. Therefore, posterior collapse can be naturally avoided via spline density functions.

The phenomenon that the posterior is approximately (as opposed to exactly) equal to the prior can also lead to posterior collapse. In S-ADVI, one can easily control the phenomena by adding regularity constraints on spline coefficients. Specifically, the regularity constraint can be written as $\{\|\bm \gamma_j (\bm x)- \bm \gamma_0\|\}^{-1}$, where $\bm \gamma_j (\bm x)$ is the spline coefficient vector for $j$-th latent variable and $\bm \gamma_0$ is the spline coefficient vector whose corresponding density function is closest to the prior. The regularity constraint prevents the posteriors from shrinking to the prior distribution.

% \section{Spline VAE}

% All submissions must follow the specified format.

% \blue{may be part of experiment}

\section{\uppercase{Additional Experiment Results}}

\subsection{Additional Results in the Posterior Approximation}

In Figure \ref{fig:all_cases} and Table \ref{tab:complete_posterior_infer_res}, we present additional results showcasing posterior approximations using S-ADVI, Gaussian-ADVI, and GM-ADVI for Cases 1 through 5, considering varying values of $x$. Our findings demonstrate that S-ADVI outperforms other methods in most scenarios. Also, in Figure \ref{fig:case_convergence}, we compare the convergence of S-ADVI against GM-ADVI for the five cases.

\begin{figure}[!tb]
\centering
\includegraphics[width=1.05\textwidth]{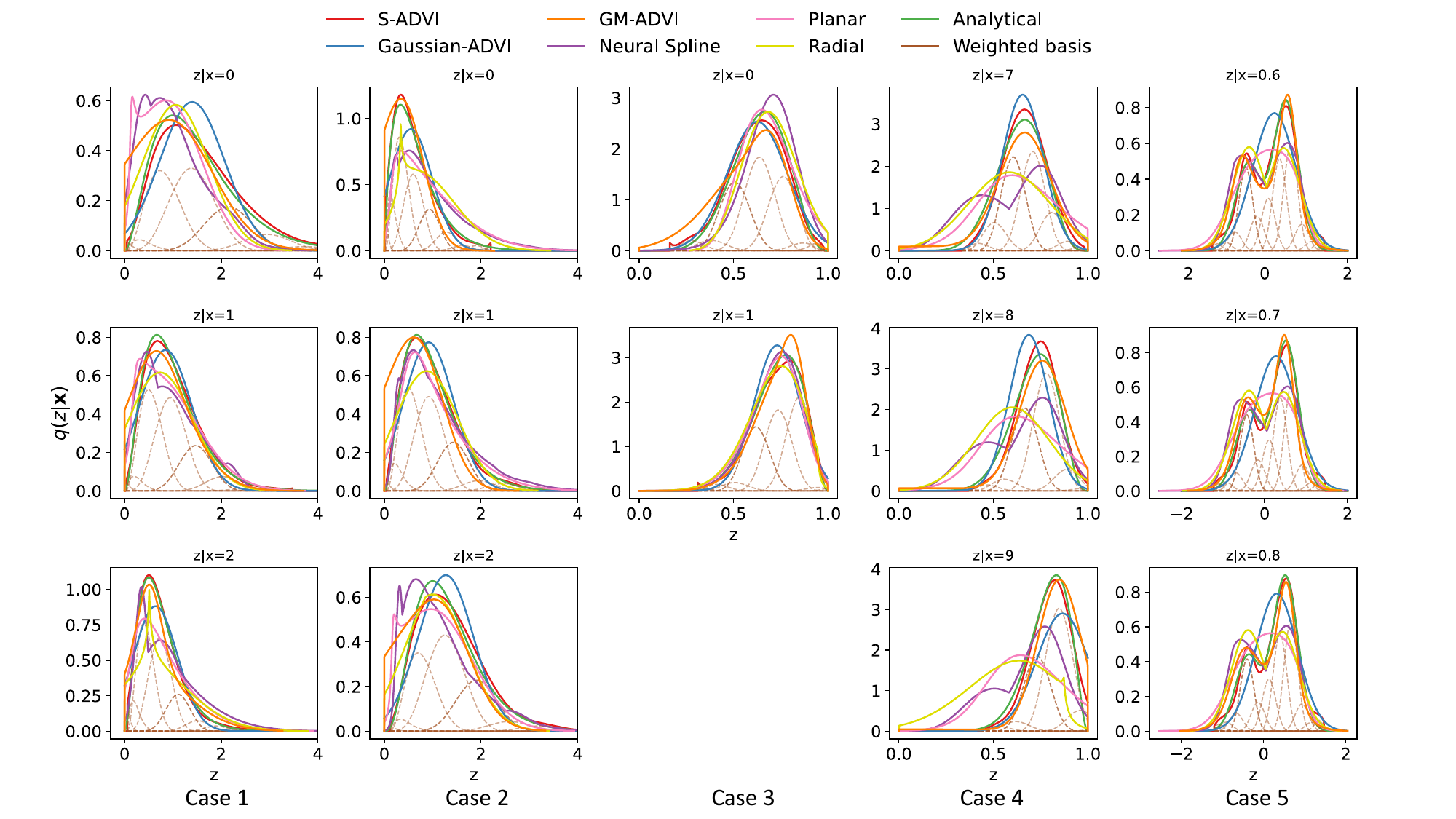}
\caption{\capitalisewords{Visualization of approximated posterior based on the S-ADVI, Gaussian-ADVI, GM-ADVI, Neural Spline, Planar, and Radial for Cases 1 -- 5}}
\label{fig:all_cases}
\end{figure}

% \begin{figure}[!tb]
% \centering
% \includegraphics[width=0.5\textwidth]{figure/aistats_toy_sadvi_loss_converge (1).pdf}
% \caption{\red{Convergence of S-ADVI for Case 1.}}
% \label{fig:case1_convergence}
% \end{figure}

\begin{figure}[!htb] % only for (N_{explore}, N_{evaluate}) = (30,15)
	\centering
    \subfloat[Case 1]{\includegraphics[width=.1925\textwidth]{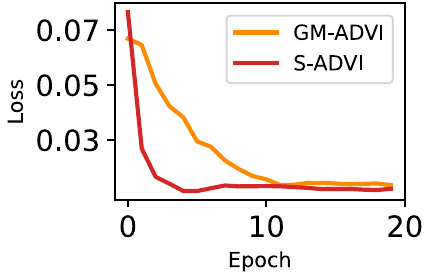}}
  	\subfloat[Case 2]{\includegraphics[width=.1925\textwidth]{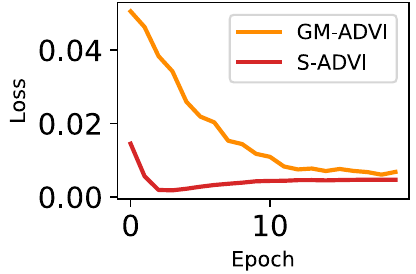}}
	\subfloat[Case 3]{\includegraphics[width=.1925\textwidth]{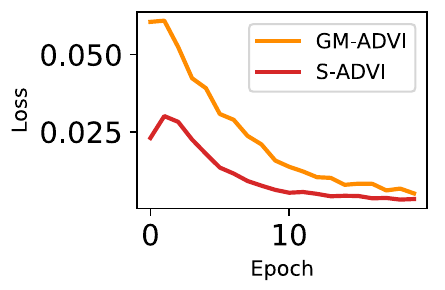}}
    \subfloat[Case 4]{\includegraphics[width=.1925\textwidth]{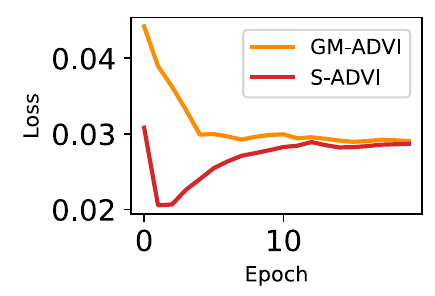}}
    \subfloat[Case 5]{\includegraphics[width=.1925\textwidth]{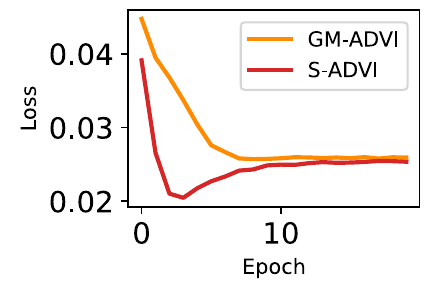}}
	\caption{\capitalisewords{Comparison of convergence between S-ADVI and GM-ADVI}} \label{fig:case_convergence}
\end{figure}

% Table generated by Excel2LaTeX from sheet 'Sheet1'
\begin{table}[htbp]
  \centering
    \caption{\capitalisewords{Mean and Standard Deviation (in bracket) of Root Integrated Squared Error in Posterior Approximation Based on Gaussian-ADVI, GM-ADVI, Neural Spline, Planar, Radial, and S-ADVI}}\label{tab:complete_posterior_infer_res}%
    \vspace{1ex}
  \resizebox{1.0\textwidth}{!}{ 
    \begin{tabular}{lcccccccccccccccccc}
    \hline \hline
     \multirow{2}{*}{Method}     & \multicolumn{3}{c}{Case 1} &       & \multicolumn{3}{c}{Case 2} &       & \multicolumn{2}{c}{Case 3} &       & \multicolumn{3}{c}{Case 4} &       & \multicolumn{3}{c}{Case 5} \\
\cmidrule{2-4}\cmidrule{6-8}\cmidrule{10-11}\cmidrule{13-15}\cmidrule{17-19}          & \multicolumn{1}{l}{$x=0$} & \multicolumn{1}{l}{$x=1$} & \multicolumn{1}{l}{$x=2$} &       & \multicolumn{1}{l}{$x=0$} & \multicolumn{1}{l}{$x=1$} & \multicolumn{1}{l}{$x=2$} &       & \multicolumn{1}{l}{$x=0$} & \multicolumn{1}{l}{$x=1$} &       & \multicolumn{1}{l}{$x=7$} & \multicolumn{1}{l}{$x=8$} & \multicolumn{1}{l}{$x=9$} &       & \multicolumn{1}{l}{$x=0.6$} & \multicolumn{1}{l}{$x=0.7$} & \multicolumn{1}{l}{$x=0.8$} \\
    \midrule
    \multirow{2}{*}{\makecell{Gaussian-ADVI} } &0.295 & 0.408 & 0.713  &       & 0.268  & 0.239 & 0.251 &       & 0.630 & 0.529&       & 0.631  & 0.876 & 1.14  &       & 0.249  & 0.246 & 0.243  \\
    &(0.148) &  (0.274) & (0.381) &       &  (0.057) &  (0.044) &  (0.064) &       &  (0.239) & (0.229)&       &  (0.386) & (0.394) &  (0.431) &       & (0.014) & (0.013) & (0.014)\\
    \hline
    \multirow{2}{*}{GM-ADVI} & 0.206 & 0.353 & 0.174 &       & 0.237 & 0.211& 0.226 &       &  0.403  & 0.171 &       & 0.395  & 0.317 & 0.487 &       & 0.123 & 0.136 &  0.137 \\
    & (0.103) & (0.152)& (0.095) &       & (0.031) & (0.026) & (0.040) &       &  (0.115) & (0.121) &       & (0.107) & (0.096) & (0.134) &       & (0.017) & (0.019) &   (0.020) \\
    \hline
    \multirow{2}{*}{Neural Spline } & 0.402  & 0.295 & 0.335  &       & 0.356 & 0.158  & 0.324  &       & 0.528  & 0.231 &       & 0.956  & 1.059 & 1.299  &       & 0.476  & 0.486  &  0.497  \\
    &  (0.055) &  (0.055)&  (0.053) &       &  (0.048) &  (0.032) &  (0.048) &       &  (0.077) &(0.082)&       &  (0.075) & (0.147) & (0.197) &       &  (0.007) &  (0.007) &   (0.007) \\ \hline 
     \multirow{2}{*}{Planar} & 0.367  &  0.279 & 0.345  &       & 0.369  &  0.182  & 0.316  &       & 0.384  & 0.390 &       & 1.043  & 1.209  & 1.400  &       & 0.470  & 0.480  &  0.491  \\
     & (0.046) &  (0.055)& (0.051) &       & (0.045) &   (0.031) &  (0.052) &       &  (0.099) &  (0.105)&       &  (0.241) & (0.160) &  (0.123) &       &  (0.009) &  (0.010) &  (0.010) \\ \hline 
      \multirow{2}{*}{Radial}& 0.274  &  0.258 & 0.419  &       & 0.499  &   0.212  & 0.198  &       & 0.440  & 0.283 &       & 0.927  & 1.174  & 1.480  &       & 0.484) & 0.495  &  0.504  \\
      & (0.053) &   (0.058)&  (0.045) &       &  (0.043) &    (0.026) &  (0.038) &       & (0.102) & (0.112)&       & (0.155) & (0.156) & (0.183) &       & (0.010) &  (0.010) &   (0.011) \\ \hline 
    \multirow{2}{*}{S-ADVI} & \textbf{0.094} & \textbf{0.086} & \textbf{0.088} &       & \textbf{0.088} & \textbf{0.054} & \textbf{0.101} &       & \textbf{0.211} & \textbf{0.146} &       & \textbf{0.310} & \textbf{0.323} & \textbf{0.371} &       & \textbf{0.101} & \textbf{0.099} & \textbf{0.097} \\
    & {(0.029)} & { (0.046)} & {(0.054)} &       & {(0.027)} & {(0.020)} & {(0.028)} &       & {(0.070)} & {(0.041)} &       & {(0.080)} & {(0.101)} & {(0.141)} &       & {(0.024)} & {(0.025)} & {(0.025)} \\
    \hline \hline
    \end{tabular}%
}

\end{table}%

\subsection{The Effect of Hyper-tuning Parameters in Posterior Approximation}
We aim to evaluate the effects of hyperparameters, such as the impact of roughness penalty parameters, temperature decay, and the number of random samples in IWAE. See Figure \ref{fig:tunning} for an illustration of hyperparameter effects. For both Case 1 and Case 2 in Figure \ref{fig:tunning}, we notice that the spline roughness penalty can balance the bias and variance. When set to be small, the estimated curve is more wiggling than the scenario with a large roughness penalty. When the roughness penalty is large, the model is overly smooth. The model is not very sensitive to the speed of decay of temperature (Temp.Decay). But model fitness is less than ideal when it is too large or small for different reasons. For example, from Figure 2, both Case 1 and Case 2 show good performances except when Temp.Decay = 1 or 15. When it is too small, concrete distribution cannot approximate categorical distribution. When it is too large, in the training process, some components may not be fully explored. For the IWAE random samples, the results in Figure \ref{fig:tunning} are consistent with previous works. In Figure \ref{fig:tunning} (a), for the Gamma posterior distribution, approximation performance varies little with the increasing number of random samples. However, when the posterior distribution is multimodal, the approximation is beneficial from the IWAE random samples. 

\begin{figure*}[!htb]
\centering
\begin{tabular}{@{}c@{}}
  
\includegraphics[width=0.9\textwidth]{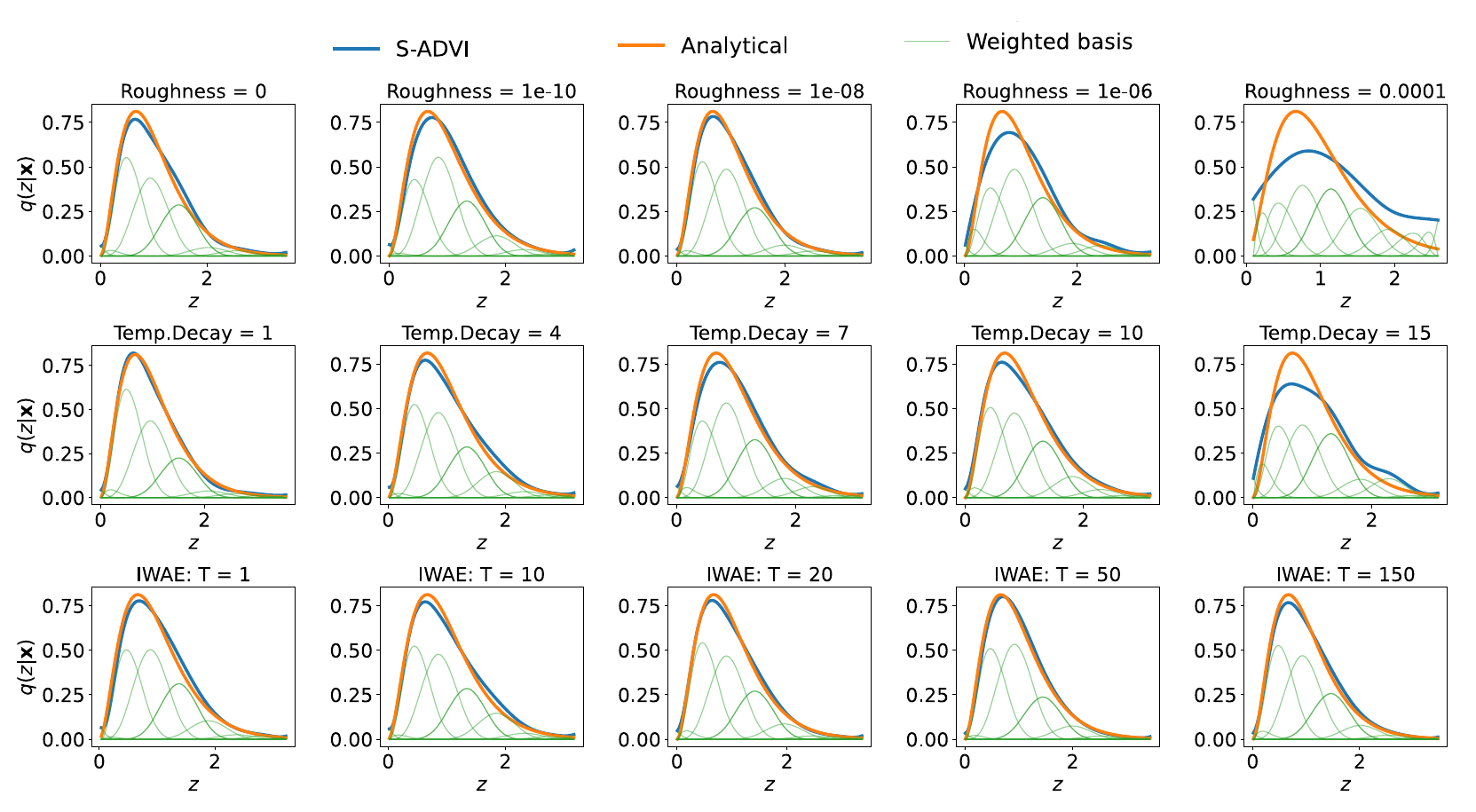}\\
% \\[\abovecaptionskip]
\small (a) Case 1: Gamma-Exponential
\\
\\
\end{tabular}
 \vspace{\floatsep}
\begin{tabular}{@{}c@{}}
\includegraphics[width=0.9\textwidth]{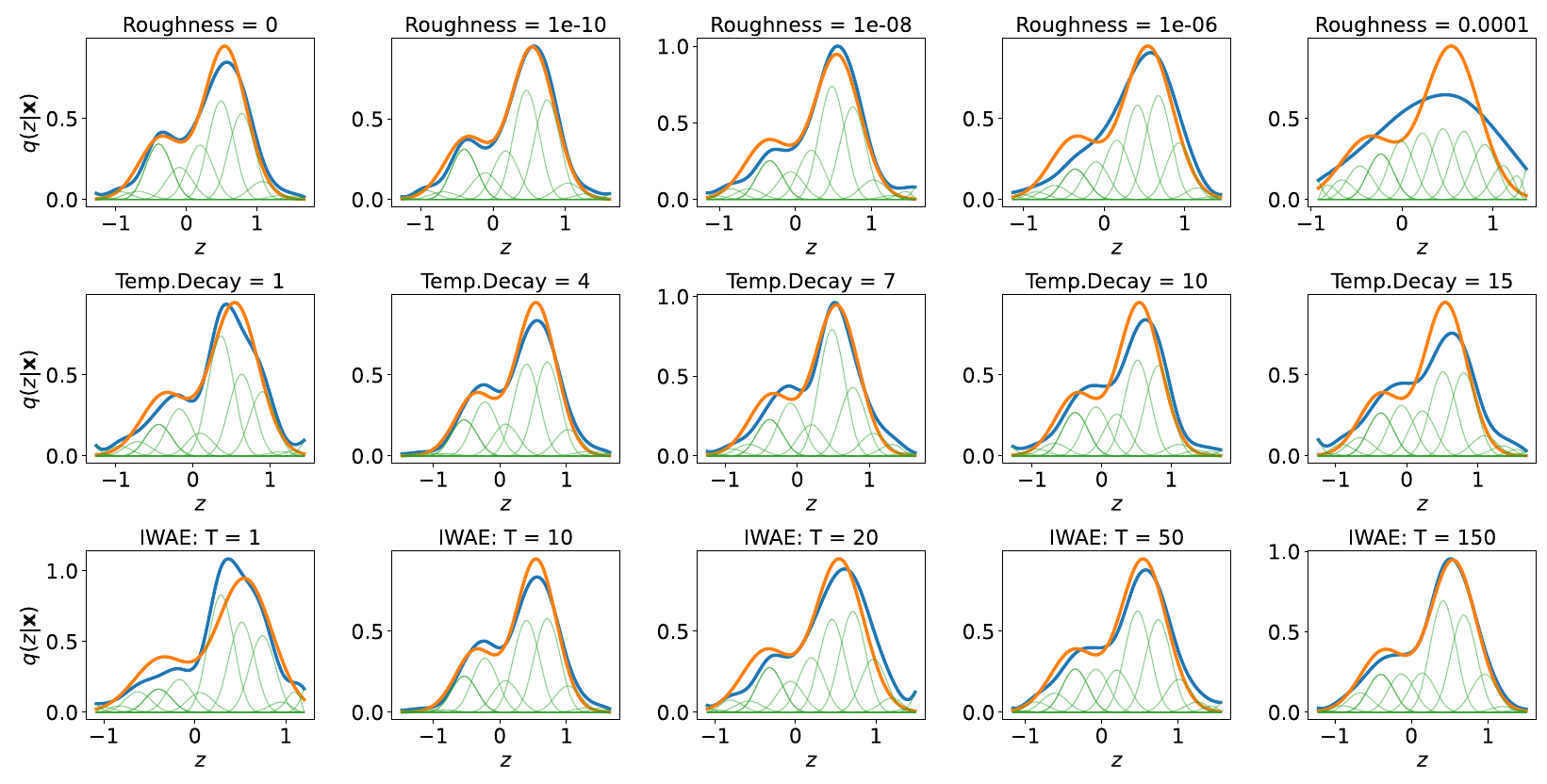}
\\
\small (b) Case 2: Gaussian Mixture
\end{tabular}
\caption{\capitalisewords{Hyperparameter effects on S-ADVI}}
\label{fig:tunning}
\end{figure*}

\subsection{Additional Results for Single Column Classification}

Figure \ref{fig:singlecolumn} represents an illustration example of an input sample. The black column is the input for image classification. We train the model with $H = \{1, 3, 5, 7, 9\}$ and $J = \{2, 4, 6, 8\}$. We set $v_0 = 0$ and $v_{H+1} = 1$, choose $v_h$ with equal space for $h=1,\ldots, H$, and set $T = 10$ for the IWAE. The encoder is initialized as an MLP with two layers of 512 and 256 hidden nodes and ELU activation function \citep{clevert2015fast}. The output layer predicts the parameters for distribution over $J$ latent variables. All models are tuned with 10-fold cross-validation and trained for 50 epochs using Adam optimizer with a learning rate 0.001 decayed by 5\% for every epoch. We use the $\beta = 0.05$ penalty on the KL divergence term. We choose $\eta=4,\, \Lambda_0=1,\, \Lambda_1=0.05$ in annealing such that the temperature decreases from 1 to 0.05 in 15 epochs. Figure \ref{fig:performace-supp} presents the performance comparison for single-column classification for the MNIST dataset with a range of latent variables $J$ and interior knots $H$. Error bars denote the standard error. Figure \ref{fig:fminst_loss} demonstrates the convergence of S-ADVI for MINST classification.

\begin{figure}[h]
    \centering
    \includegraphics[width=0.2\textwidth]{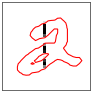}
    \caption{\capitalisewords{An illustration of a single column of an input sample, where The black column is the input for image classification}}
    \label{fig:singlecolumn}
\end{figure}

\begin{figure*}[!tb] % only for (N_{explore}, N_{evaluate}) = (30,15)
    \centering
    \includegraphics[width=0.92\textwidth]{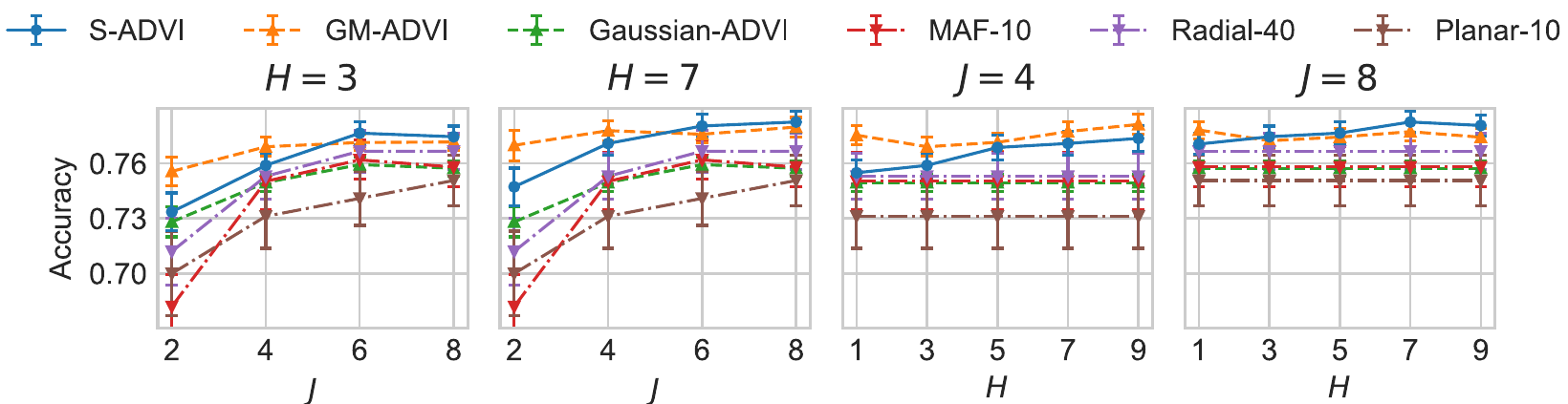}
    \vskip -0.1in
	\caption{\capitalisewords{Performance comparison for single column classification for the MNIST dataset with a range of latent variables $J$ and interior knots $H$ (Error bars denote the standard error)}} \label{fig:performace-supp}
\end{figure*}

\begin{figure}[!htbp]
    \centering\includegraphics[width=0.4\textwidth]{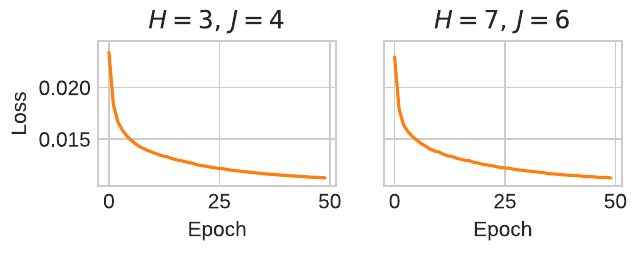}
    \caption{\capitalisewords{Convergence of S-ADVI in terms of IWAE loss for FMNIST}}
    \label{fig:fminst_loss}
\end{figure}

To evaluate the influence of hyperparameters, we also test the prediction performance of our method when $J>8$ and the number of basis spline functions is larger. For latent dimensions $J > 8$, we observe similar performance with average accuracy 0.683 at $J = 10$, 0.686 at $J = 12$, and 0.684 at $J = 14$ for S-ADVI with $H = 3$. In comparison, GM-ADVI achieves 0.679, 0.683, 0.680, and Radial achieves 0.667, 0.673, 0.675, respectively. Given the marginal performance
gains with additional latent dimensions, these results were omitted. While penalized spline mitigates overfitting,
excessive spline bases lower performance (accuracy drops from 0.680 at $H = 15$ to 0.674 at $H = 30$).

\subsection{Comparison of Two Annealing Methods} \label{app_sec:annealing_exp}

\begin{figure}[!htbp]
    \centering\includegraphics[width=0.8\textwidth]{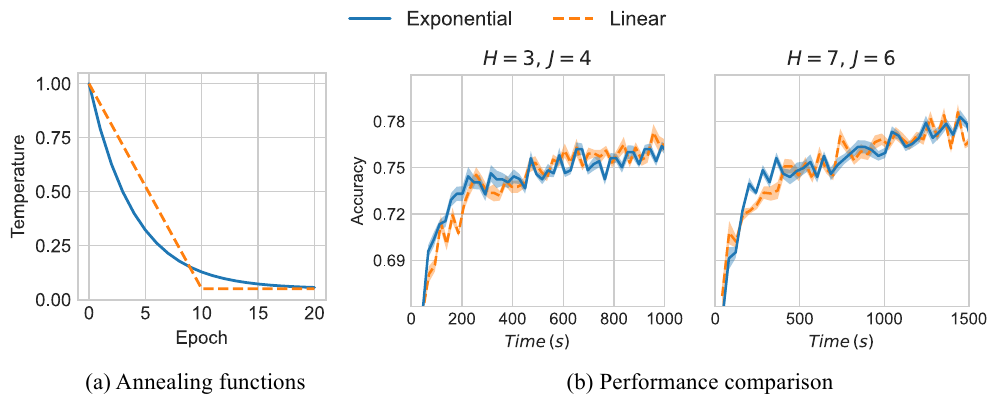}
    \caption{(a) \capitalisewords{Illustration of two annealing functions} (b) \capitalisewords{Performance comparison for two annealing functions under different combinations of $H$ and $J$} (The shadow denote the standard error)}
    \label{fig:annealing_supp}
\end{figure}

Annealing is important in the proposed method to ensure most of the combinations of coefficients are explored. In this section, we evaluate the influences of annealing functions on model performance. We consider two annealing functions in this experiment:
\begin{description}
  \item[Exponential] $\Lambda(c)=\Lambda_1+ (\Lambda_0-\Lambda_1)e^{-c/\eta}$,
  \item[Linear] 
  \begin{equation*}
    \Lambda(c)=
    \begin{cases}
      \Lambda_0 - \frac{\Lambda_0 - \Lambda_1}{\eta'}c & \text{if $c \le \eta'$}\\
      \Lambda_1 & \text{if $c > \eta'$}    \end{cases}.
  \end{equation*}
\end{description}
The parameter $\eta$ in the exponential annealing function controls the decreasing rate of the temperature, while $\eta'$ in the linear annealing function determines the decreasing rate and the turning point where the temperature becomes a constant.

We run the same single-column classification task on MNIST as in Section \ref{SEC:Single-Column}. The effects of annealing functions are evaluated based on the convergence speed. As in the main paper, we consider two combinations of $H$ and $J$: ($H=3,\, J=4$) and ($H=7,\, J=6$), and choose the classification accuracy as the performance metric. For the exponential function, we use $\Lambda_0 = 1,\, \Lambda_1 = 0.05$ and $\eta = 4$. For the linear function, we use $\Lambda_0 = 1,\, \Lambda_1 = 0.05$ and $\eta' = 10$.

The results are shown in Figure \ref{fig:annealing_supp}. We find that the exponential annealing function used in the main text converges faster than the linear function under both parameter combinations. On the other hand, after the initial epochs, the performance is similar. One potential reason is that the temperature decreases faster with the exponential function at the first few epochs, which helps the model to better approximate the categorical distribution with better parameter estimation. 

\subsection{Imaging Reconstructions} \label{app_sec:full_image_exp}

\begin{figure}[!htb] % only for (N_{explore}, N_{evaluate}) = (30,15)
	\centering
    \subfloat[MNIST]{\includegraphics[width=\textwidth]{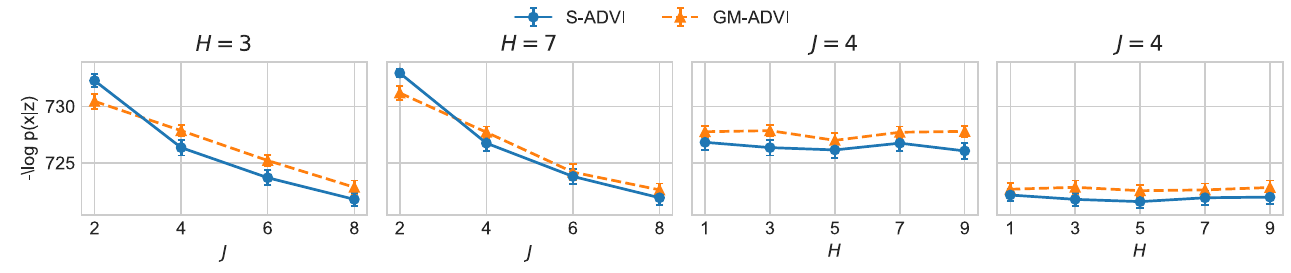}}\\
  	\subfloat[FMNIST]{\includegraphics[width=\textwidth]{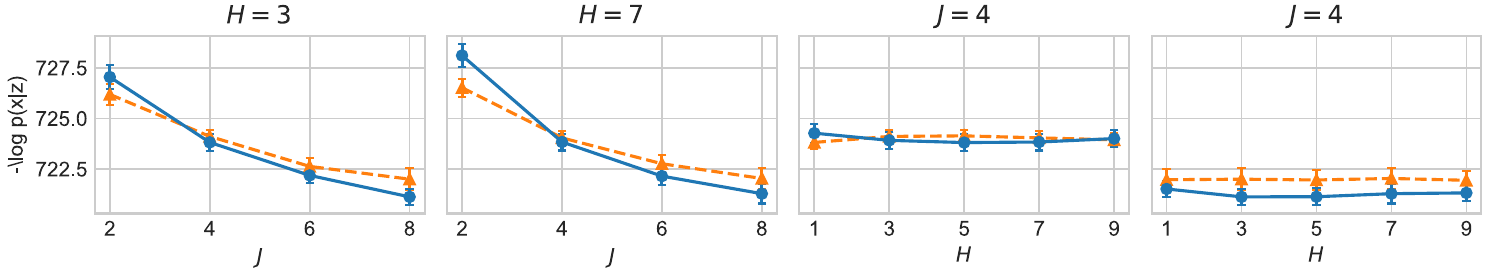}}\\
	\subfloat[CIFAR-10]{\includegraphics[width=\textwidth]{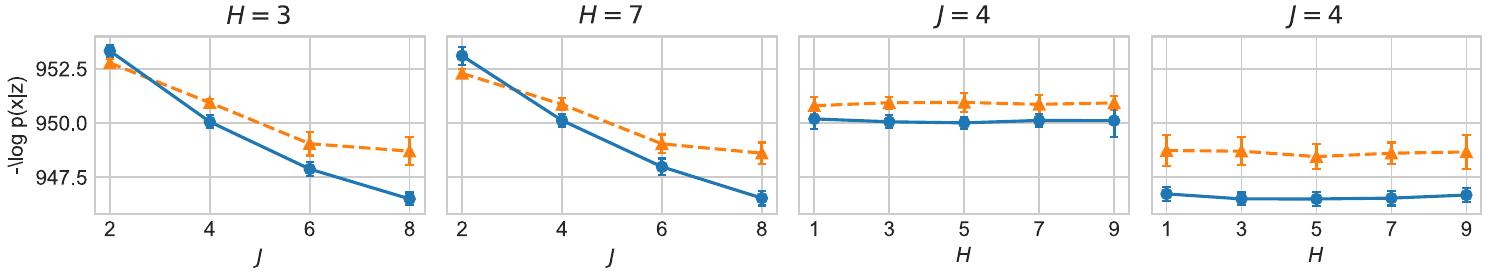}}\\
	\caption{\capitalisewords{Reconstruction comparison for full image VAE with a range of latent variables $J$ and interior knots $H$ (smaller is better). Error bars denote the standard error.}} \label{fig:recon_performance}
\end{figure}

Our previous experiments suggest that the proposed S-ADVI methods can match and outperform other VI-based methods in terms of classification task, even when the data does not contain sufficient information. In this experiment, we compare the S-ADVI and GM-ADVI methods in terms of reconstruction using MNIST \citep{lecun1998gradient}, FMNIST \citep{xiao2017online}, and CIFAR-10 \citep{krizhevsky2009learning} datasets. 

The network structures of the encoder and decoder are MLP with 2 layers. The numbers of hidden units are 512 and 256 for the encoder, and 256 and 512 for the decoder.  We choose $\{0.25,\, 0.50,\, 0.75\}$ as the interior knots and use 4 latent variables. We set $T=10$ for IWAE, use isotropic Gaussian distribution as the prior, and train the model with Adam optimizer with a learning rate of 0.001 decayed by 5\% for every epoch.  The final output of the decoder includes the means and standard deviations of the pixels. For the CIFAR-10 dataset, we transform the color images to greyscales. We evaluate the reconstruction performance using $-\log p(\bm x|\bm z)$, where we assume that $p(\bm x|\bm z)$ follows Gaussian distribution. 

The results are shown in Figure \ref{fig:recon_performance}. We find that when the number of latent variables is limited, GM-ADVI outperforms S-ADVI. When $J$ increases, the performance of both models improves, and S-ADVI outperforms GM-ADVI. The number of bases $H$ also influences model performance when $H$ increases from 1 to 3, but further increasing $H$ does not improve model performance much and can lead to overfitting. It is also interesting to note that the advantage of S-ADVI increases for more complicated datasets (e.g., MNIST vs CIFAR-10) with higher $J$ and $H$. Several facts may contribute to the observation. For example, S-ADVI is capable of capturing bounded and skewed posterior distribution. The pre-specified spline bases can help prevent the model from concentrating on the few most important modes and ignoring those relatively less important modes. The bounded support of the posterior from S-ADVI may mitigate overfitting. 

\subsection{Analysis of the Shape of the Approximated Posterior with VAE}
\label{SEC:shape-supp}

The network structures of the encoder and decoder are MLP with 2 layers. The numbers of hidden units are 512 and 256 for the encoder, and 256 and 512 for the decoder. We use $\beta = 0.05$ penalty on the KL divergence term. We choose $\{0.25,\, 0.50,\, 0.75\}$ as the interior knots and use 4 latent variables. We set $T=10$ for IWAE, use isotropic Gaussian distribution as the prior, and train the model with Adam optimizer with a learning rate of 0.001 decayed by 5\% for every epoch. 

% \subsection{Visualization of Shapes of Latent Variables}

Examples of the approximated $q_{\phi}(\epsilon_j | \bm x)$ are illustrated in Figure \ref{fig:latent-pdf} for digits 0, 2, and 4. We randomly select 100 samples for each digit from the testing set and visualize the approximated posteriors' probability density functions (PDFs) and their average values across all samples. 
The results show that the proposed method can effectively capture the skewness (e.g., $q_{\phi}(\epsilon_3 | \bm x)$) and multimodality (e.g., $q_{\phi}(\epsilon_1 | \bm x)$) of the posterior distribution. 
Furthermore, in general, while labels are not included in the training process, the approximated PDFs of samples of the same digit share similar shapes, whereas samples of different digits can have different shapes. For example, samples of digits 0 and 2 have different PDFs for $\epsilon_3$, and the modes differ for $\epsilon_4$ for samples of digits 0, 2, and 4. These observations imply that the shape of the PDFs of the underlying posteriors can capture the similarities of samples from the same group (samples of the same digit) and the differences of samples from different groups. % We also notice that samples from different digits can have similar shapes (e.g., $\epsilon_2$ for samples of digits 0, 2, 4), which implies these groups shared latent features.
Figure \ref{fig:latent-pdf_2} shows original samples and S-VAE reconstructions for digit 2, following the same format as Figure~\ref{fig:latent_individual_ig} in the main text. Similar patterns are observed, where bases with higher coefficients represent larger image regions and capture key image characteristics.

\begin{figure*}[!t] % only for (N_{explore}, N_{evaluate}) = (30,15)
    \centering
    \includegraphics[width=\textwidth]{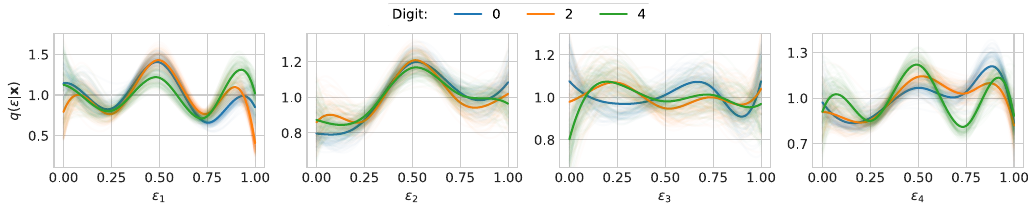}
    \vskip -0.1in
	\caption{\capitalisewords{Illustration of shapes of latent variable distributions for digits 0, 2, and 4. We randomly select 100 samples for each of the digits. PDFs of individual samples are in light colors, and the deep solid curve denotes the averages of PDF values}} \label{fig:latent-pdf}
\end{figure*}

\begin{figure*}[!t] % only for (N_{explore}, N_{evaluate}) = (30,15)
    \centering
    \includegraphics[width=\textwidth]{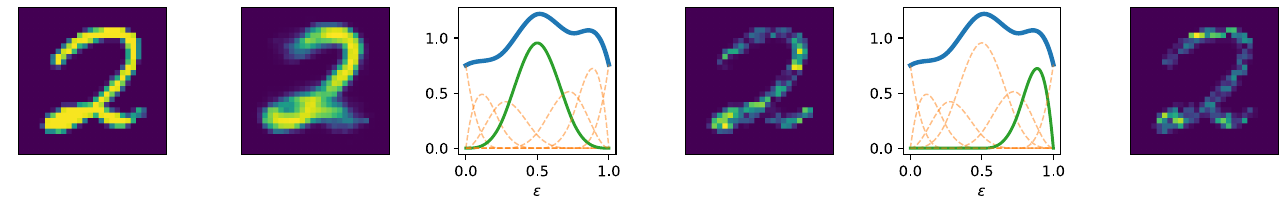}
    \vskip -0.12in
	\caption{\capitalisewords{Analysis of the approximated shape of posterior from S-VAE for digit 2}} \label{fig:latent-pdf_2}
\end{figure*}

\begin{figure}[!htb] % only for (N_{explore}, N_{evaluate}) = (30,15)
	\centering
    \subfloat[Digit 1]{\includegraphics[width=.975\textwidth]{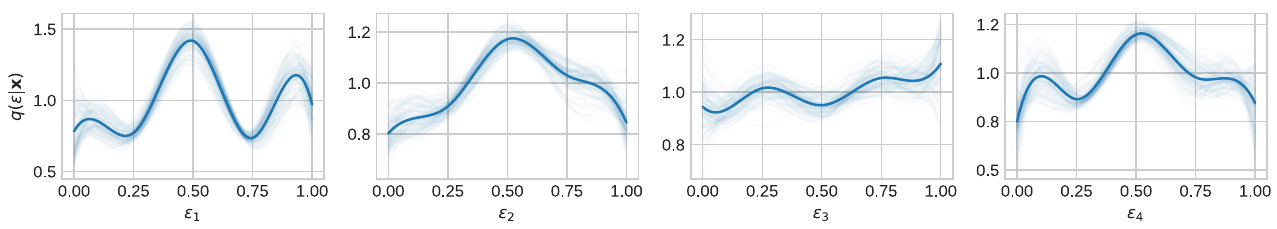}}\\
  	\subfloat[Digit 3]{\includegraphics[width=.975\textwidth]{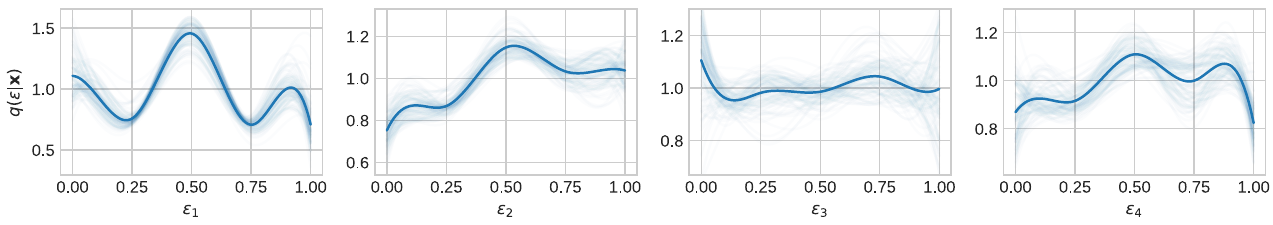}}\\
	\subfloat[Digit 5]{\includegraphics[width=.975\textwidth]{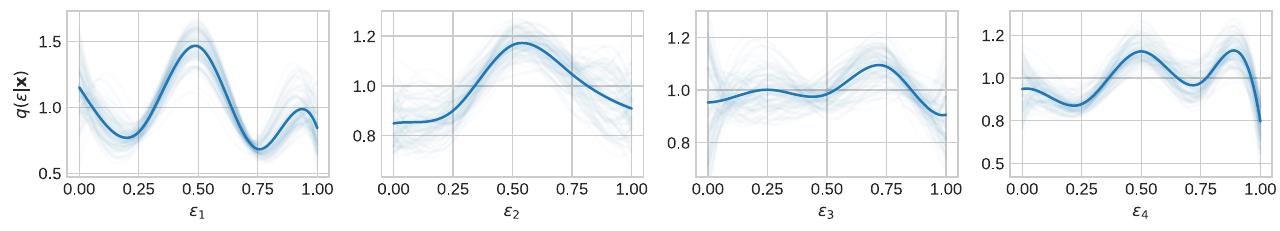}}\\
    \subfloat[Digit 6]{\includegraphics[width=.975\textwidth]{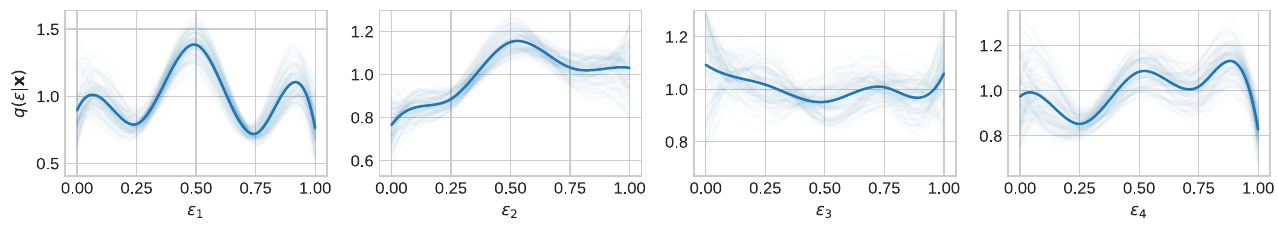}}\\
	\caption{\capitalisewords{Visualization of latent distributions for digits 1, 3, 5, 6}} \label{fig:latent_individual_pdf_1356}
\end{figure}

\begin{figure}[!htb] % only for (N_{explore}, N_{evaluate}) = (30,15)
	\centering
    \subfloat[Digit 7]{\includegraphics[width=.975\textwidth]{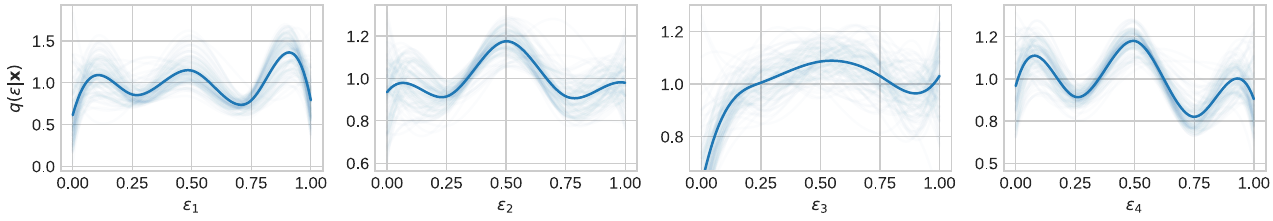}}\\
    \subfloat[Digit 8]{\includegraphics[width=.975\textwidth]{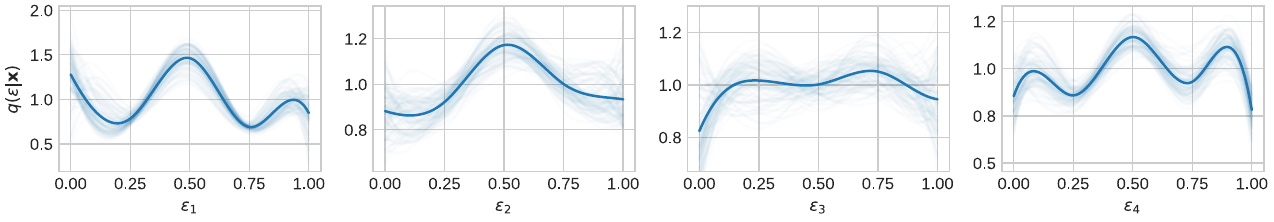}}\\
    \subfloat[Digit 9]{\includegraphics[width=.975\textwidth]{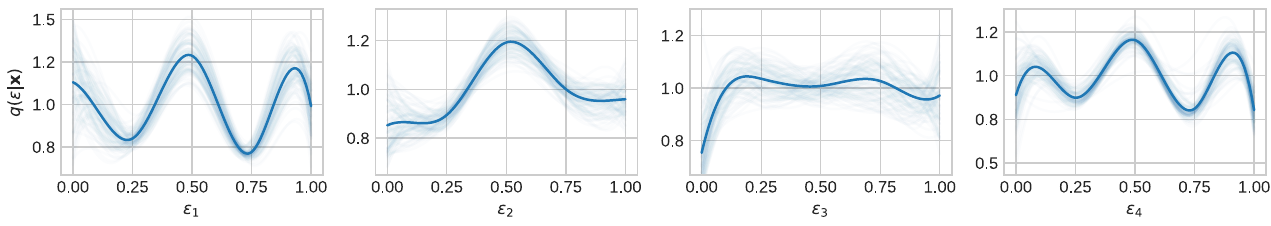}}\\
	\caption{\capitalisewords{Visualization of latent distributions for digits 7, 8, 9}} \label{fig:latent_individual_pdf_789}
\end{figure}

In Figures \ref{fig:latent_individual_pdf_1356} and \ref{fig:latent_individual_pdf_789}, we visualize the latent variables for digits 1, 3, 5, 6, 7, 8, 9. We find that the shapes of posteriors of certain latent variables can be shared across digits with similar patterns, such as $\epsilon_1$ for digits 3, 5, 8, and 9. We also observe that for some digits (e.g., 5, 6, 8, 9), the shapes of PDF of $\epsilon$ are inconsistent within the groups. One possible reason is the existence of pattern variation in the handwritten digits, i.e., samples of the same digit can be further divided into subgroups, where samples from different subgroups can have very different handwritten patterns. As discussed in Section \ref{sec:discussion}, further evaluation and investigation with statistical methods are required to identify and interpret the observed inconsistent PDFs.

%\subsection{Interpretation of Weighted Spline Basis} \label{apdx:IG_sample_67}

We visualize the connections between input samples and the shapes of the approximated $q_\phi(\epsilon_j| \bm x)$ for two digits, 6 and 7, in Figure \ref{fig:individual_samples_67}. Based on Figures \ref{fig:latent_individual_pdf_1356} and \ref{fig:latent_individual_pdf_789}, we focus on the 4th basis (the centermost spline basis) of $\epsilon_1$ and the 6th basis of $\epsilon_4$ (the second to the right spline basis). We randomly select three samples for each of the digit. We find that the results are consistent with the examples in the main text for digits 0 and 4. Spline bases with higher weights are usually associated with larger regions in the images. We also observe that the same basis usually points to the same digit region. For example, IG suggests that basis 4 of $\epsilon_1$ is associated with the middle part of digit 6, and basis 6 of $\epsilon_4$ is associated with the lower part of digit 7.

\begin{figure}[!tbp] % only for (N_{explore}, N_{evaluate}) = (30,15)
	\centering
    \subfloat[Digit 6, basis 4 of $\epsilon_1$]{\includegraphics[width=0.45\textwidth]{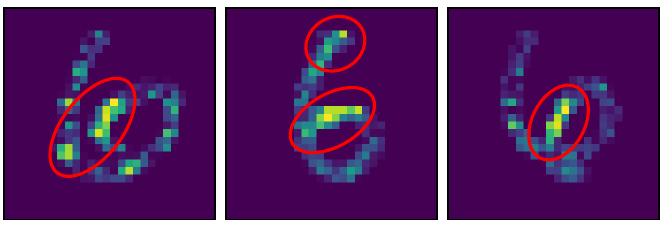}} \hspace{0.9ex}
    \subfloat[Digit 6, basis 6 of $\epsilon_4$]{\includegraphics[width=0.45\textwidth]{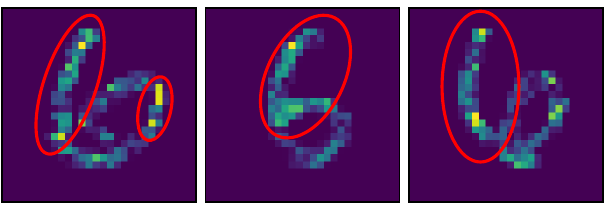}}\\
    \subfloat[Digit 7, basis 4 of $\epsilon_1$]{\includegraphics[width=0.45\textwidth]{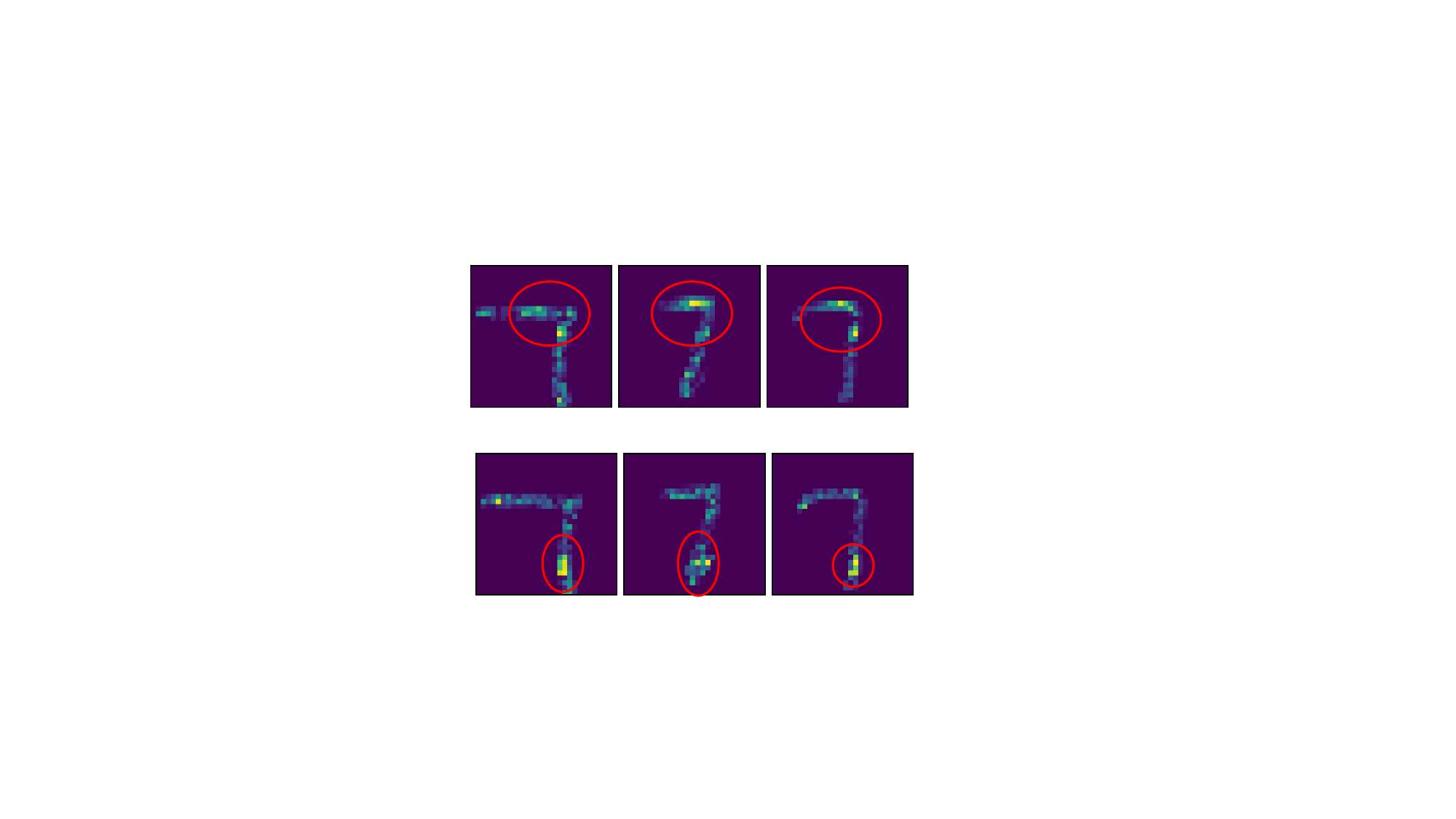}} \hspace{0.9ex}
    \subfloat[Digit 7, basis 6 of $\epsilon_4$]{\includegraphics[width=0.45\textwidth]{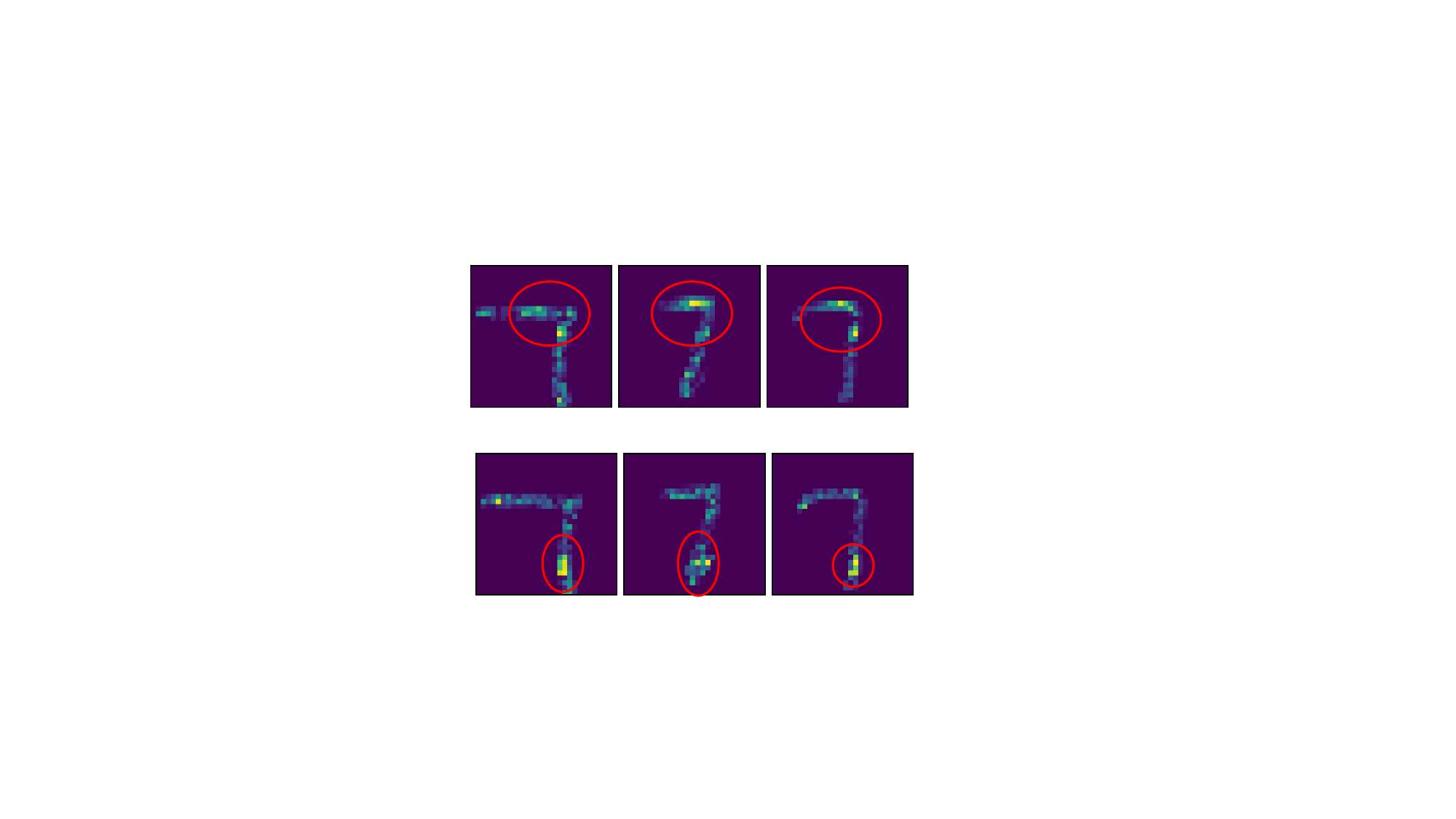}}\\
	\caption{\capitalisewords{Interpretation of weighted spline bases for samples of digits 6 and 7. Brighter pixels correspond to higher absolute attributes. Regions containing highly attributed pixels are highlighted in red circles}} \label{fig:individual_samples_67}
\end{figure}
\FloatBarrier
%%%%%%%%%%%%%%%%%%%%%%%%%%%%%%%%%%%%%%%%%%%%%%%%%%%%%%%%%%%%%%%%%%%%%%%%%%%%%%%
%%%%%%%%%%%%%%%%%%%%%%%%%%%%%%%%%%%%%%%%%%%%%%%%%%%%%%%%%%%%%%%%%%%%%%%%%%%%%%%

\small

\bibliographystyle{asa}
\bibliography{references_TF}

\end{document}